\documentclass{article} 
\usepackage{iclr2025_conference,times}


\usepackage{hyperref}
\usepackage{url}
\usepackage{amsmath}
\usepackage{amsfonts}
\usepackage{amsthm}
\usepackage{amssymb}
\usepackage{mathtools}
\usepackage{chngcntr}
\mathtoolsset{showonlyrefs}

\usepackage{float}

\newtheorem{theorem}{Theorem}
\newtheorem{lemma}[theorem]{Lemma} 
\newtheorem{proposition}[theorem]{Proposition}
\newtheorem{remark}[theorem]{Remark}
\newtheorem{definition}[theorem]{Definition}

\newtheorem{corollary}[theorem]{Corollary}

\newcommand{\R}{\mathbb{R}}
\newcommand{\E}{\mathbb{E}}
\renewcommand{\d}{\mathrm{d}}

\DeclareMathOperator*{\argmin}{arg\,min}

\renewcommand{\d}{\mathrm{d}}
\newcommand{\dd}{\mathrm{d}}

\newcommand{\Prob}{\mathcal{P}}

\title{Neural Sampling from Boltzmann Densities: \\
Fisher-Rao Curves in the Wasserstein Geometry}


\author{Jannis Chemseddine, Christian Wald, Richard Duong, Gabriele Steidl\\
Institute of Mathematics\\
TU Berlin\\
Stra{\ss}e des 17. Juni 136
Berlin, Germany \\
Correspondence to: \texttt{chemseddine@math.tu-berlin.de}
}

%

\iclrfinalcopy 
\begin{document}

\maketitle

\begin{abstract}
We deal with the task of sampling from an unnormalized Boltzmann density $\rho_D$
by learning a Boltzmann curve given by energies $f_t$ starting in a simple density $\rho_Z$.
First, we examine conditions under which Fisher-Rao flows are absolutely continuous in the Wasserstein geometry.
Second, we address specific interpolations $f_t$ and  the learning of the related density/velocity pairs $(\rho_t,v_t)$.
It was numerically observed that the linear interpolation, 
which requires only a parametrization of the velocity field $v_t$,
suffers from  a "teleportation-of-mass" issue.
Using tools from the Wasserstein geometry,
we give an analytical example,
where we can precisely measure the explosion of the velocity field.
Inspired by Máté and Fleuret, who 
parametrize both $f_t$ and $v_t$, we propose an
interpolation which parametrizes only $f_t$ and fixes an appropriate $v_t$. 
This corresponds to
the Wasserstein gradient flow of the Kullback-Leibler divergence related to Langevin dynamics. 
We demonstrate by numerical examples that our model provides a well-behaved flow field which successfully solves the above sampling task.
\end{abstract}

\section{Introduction}
In this paper, we consider the problem of sampling from a Boltzmann density $\rho_D = e^{-f_D}/Z_D$ with unknown normalizing constant $Z_D$. Our approach is in the spirit of recent developments in generative modelling and aims to construct a curve $\rho_t= e^{-f_t}/Z_t$ interpolating between a simple density $\rho_Z$ and the target $\rho_D$. If such curve admits a velocity field $v_t$ and there exists a solution $\varphi$ of 
\begin{equation} \label{ode}
\partial_t\varphi_t=v_t(\varphi_t), \quad \varphi_0=\mathrm{Id} \quad  
\text{where} \quad (\varphi_t)_\sharp (\rho_Z \d x) = \rho_t\d x,
\end{equation}
then we can use $\varphi_1$ and $\rho_Z$ to sample from $\rho_D$.  An important question is, whether for a given family of functions $f_t$, such  velocity fields exists. 
While, for a fixed time $t$, the existence of such  velocity fields is well established, see e.g. \cite{laugesen2014poissons}, global existence with integrability of $v:[0,1] \times \R^d \to \R$   in time and space is addressed in this paper. This is directly related to the question whether a large class of Fisher-Rao curves $\rho_t$ is absolutely continuous in the Wasserstein geometry, and in particular,
fulfills a continuity equation. 
This can be reduced to finding solutions of a certain family of PDEs. 
With the appropriate Hilbert spaces at hand, we show the existence of the solutions of these PDEs and the 
existence of an integrable velocity field $v_t$ in time and space which admits moreover a minimality property 
of $\|v_t\|_{L_2(\R^d,\rho_t)}$. 

Determining $\varphi_1$ requires both finding a curve $f_t$ and the associated vector field $v_t$. One way is to choose a curve interpolating between $f_Z$ and $f_D$ first  and then to learn the velocity field. For the linear interpolation 
$f_t=(1-t)f_Z+tf_D$, using above considerations, we show that there indeed exists an integrable velocity field implying the absolute continuity of the curve in the Wasserstein space.  
Yet, it was numerically shown that the corresponding velocity field is often badly behaved as noted in \cite[Figure 4 and 6]{máté2023learning}.
We highlight by an analytic example the bad regularity of this flow and show in particular that its velocity field  may have an exploding norm $\|v_t\|_{L_2(\R^d,\rho_t)}$.
Another approach, recently proposed by \cite{máté2023learning}, 
considers $f_t=(1-t)f_Z+tf_D + t(1-t)\psi_t$ with unknown $\psi$ and
learns $(\psi_t,v_t)$ simultaneously. Henceforth, we refer to this method as the "learned interpolation".
However,  it is theoretically unclear whether the learned curve is well-behaved or the velocity field is optimal with respect to the above norm. 
As an alternative approach, we propose instead to deal \emph{directly} with a {well-behaved} and mathematically accessible curve on $[0,T]$ and consider the parameterization
$f_t= \frac{T-t}{T} f_D +  t \psi_t$. Fixing the velocity field
as $v_t \coloneqq \nabla (f_t -  f_Z)$ we only have to learn $\psi_t$.
The resulting PDE is the well-known Fokker-Planck equation related to Langevin dynamics
which has a solution with Boltzmann densities $\rho_t$, if, e.g., $\rho_Z$ is a Gaussian. Then, the corresponding SDE is the Ornstein-Uhlenbeck process. Here, the backward ODE of \eqref{ode} must be applied for sampling.
Finally, we learn the networks for the above three cases and
demonstrate by numerical examples the effectiveness of our approach.

\paragraph{Contributions.} 
\begin{enumerate}
\item We prove for the first time that under mild conditions every reasonable curve of Boltzmann densities is an absolutely continuous Wasserstein curve,  see
Theorems \ref{thm:main-poisson} and \ref{thm:main-abs-cont}.
\item  We verify by an analytic example
that the velocity field corresponding to the linear interpolation 
can admit an exploding norm $\|v_t\|_{L_2(\R^d,\rho_t)}$, leading
to a ''non-smooth'' particle transport.
\item  We propose to learn a curve   using the parameterization 
$f_t= \frac{T-t}{T} f_D +  t \psi_t$, which we call ''gradient flow interpolation'' by the following reason. With an appropriate velocity field which can be computed directly from $f_t$, this resembles the
Wasserstein \emph{gradient flow} of the Kullback-Leibler divergence 
with fixed second argument $\rho_Z$. This curve can be also described as the Fokker-Planck equation of a Langevin SDE.
\item We learn the networks for the linear, learned and gradient flow interpolations
and demonstrate the performance of these three methods on certain sampling problems.
\end{enumerate}

\paragraph{Related Work.}
Classical approaches for sampling from an unnormalized density are based on the Markov Chain Monte Carlo (MCMC) method, see \cite{GRS1995}, including variants as the Hamiltonian Monte Carlo (HMM) in \cite{HG2014} or the Metropolis Adjusted Langevin Algorithm (MALA) in \cite{GC2011}. Arising problems with the correct distribution of mass among different modes lead to the use of importance sampling \cite{neal2001annealed} and sequential Monte Carlo samplers \cite{DDJ2006}. \\
Newer approaches include the simulation of gradient flows of probability measures. In \cite{CHHRS2024} Gaussian mixtures are used to approximate the Fisher-Rao gradient flow of the Kullback-Leibler divergence. This corresponds to a time rescaling of the linear interpolation. 
In \cite{nüsken2024steintransportbayesianinference,MM2024} they approximate the latter curve by kernelizing the associated Poisson equation and simulating a corresponding interacting particle system. Closely related Stein variational gradient descent \cite{liu2016stein} kernelizes the Wasserstein gradient flow of the reverse KL divergence. Recently in \cite{chehab2024practicaldiffusionpathsampling} they consider a different path which they term the "dilation path". The associated vector field is given as the gradient of the dilation of the target density. The convolution of the density of the dilation path with (scaled) Gaussian kernels corresponds to the density of an Ornstein-Uhlenbeck process. 
\\ 
Recently, (stochastic) normalizing flows have been widely used to sample from unnormalized densities \cite{noe2019boltzmann, wu2020stochastic, HHS2022}.
In \cite{midgley2023flowannealedimportancesampling} this problem is tackled without using a curve of probability measures by augmenting normalizing flows, which are trained using an $\alpha$-divergence loss, with annealed importance sampling.
In \cite{hertrich2024importance} a functional $F$ with minimum $\rho_D$ is considered. They then learn a vector field by approximating a JKO scheme for the Wasserstein gradient flow of $F$ and combine it with an importance based rejection method. They focus on $F$ being the Kullback Leibner divergence with target distribution $\rho_D$. Using KL for sampling is a natural choice and it is shown in \cite{chen2023gradient} that it is the only $f$-divergence such that the Wasserstein gradient flow does not depend on the normalization constant of $\rho_D$. \\
Note that parametrizing the vector field using the score $\nabla \log p_t$ is common practice for sampling via SDEs. In \cite{zhang2022path} for example where they formulate the sampling problem in terms of a stochastic control problem and explicitly include the score into the parametrization of the vector field. The stochastic control viewpoint is also used in \cite[Section 3]{berner2024an} where the KL divergence between path measures associated to a controlled and an uncontrolled SDE is used to approximate the score. A different approach using the KL on path measures  is formulated in \cite{vargas2023denoising}.
Another way of approximating a flow ending in $\rho_D$ is to consider the variance exploding diffusion SDE and approximating the score $\nabla \log p_t$ which is done in \cite{akhound2024iterated}. \\

 The authors of \cite{máté2023learning} present a loss based on the continuity equation and the special form $\frac{e^{-f_t}}{Z_t}$ of Boltzmann densities. This loss allows for using fixed interpolations $f_t$ and only learning $v_t$ as well as learning $f_t$ and $v_t$ simultaneously. When finishing this paper, we became aware of the recent preprint of \cite{sun2024dynamicalmeasuretransportneural} comparing a variety of PDE based sampling methods. 
They included a similar approach as the one proposed here, using the same parametrization to simulate an Ornstein-Uhlenbeck SDE. However, coming from the Wasserstein perspective, we came to the conclusion that the associated gradient flow path is especially well-behaved. We simulate the corresponding \emph{ODE} and highlight its benefits.\\
For the proof that many Boltzmann density curves are absolutely continuous Wasserstein curves, the main object of interest is
\begin{align}\label{eq:mainpde}
- \nabla \cdot \left(\rho_t \nabla s_t\right) = \left(\alpha_t - \overline{\alpha_t}\right)\rho_t, \quad t\in [0,1]
\end{align}
which relates Fisher-Rao curves and Wasserstein absolutely continuous curves.
This equation was studied for a fixed time $t$ in various contexts. In nonlinear filtering it is used to study weak solutions of nonlinear filters see e.g. \cite[Section 2.1]{laugesen2014poissons}. Their theory for fixed time steps $t$ can be generalized to the case of a time dependent equation as seen in Section \ref{section:poisson}. The close relationship between the Poincare inequality and fixed time solutions of \eqref{eq:mainpde} is discussed in \cite{dhara2015equivalent}.  

\section{Wasserstein Flows of Boltzmann Densities}
\subsection{General Background}
Let $\mathcal P(\R^d)$ denote the space of probability measures on $\R^d$.
For $\mu \in \mathcal P(\R^d)$, let $L^p(\R^d,\mu)$, $p \in [1,\infty)$, denotes the Banach space of (equivalence classes of) real-valued functions on $\R^d$ with finite norm
$\|f\|_{L_p(\R^d,\mu)} \coloneqq \left(\int_{\R^d} |f|^p \, \d \mu \right)^{1/p}$,
and  $L_p(\R^d,\R^d,\mu)$ be
the Banach space of Borel vector fields $v: \R^d \to \R^d$
with finite norm
$\|v\|_{L_p(\R^d,\R^d,\mu)} \coloneqq (\int_{\R^d} |\| v\||^p \, \d \mu)^{1/p}$, 
where $\| \cdot\|$ is the Euclidean norm on $\R^d$. For the Lebesgue measure, we skip the $\mu$ in the notation.
The space $\mathcal P_2(\R^d)$ of probability measures having finite second moments
equipped with the Wasserstein(-2) metric
\begin{equation} \label{wasserstein}
W_2(\mu_0,\mu_1)\coloneqq 
\min_{\pi \in\Gamma(\mu_0,\mu_1)}\Big(\int_{\R^d \times \R^d}\|x-y\|^2 \, \d \pi(x,y)\Big)^{\frac{1}{2}},
\end{equation}
where $\Gamma(\mu_0,\mu_1)$ denotes the
set of couplings or plans $\pi \in \mathcal P(\R^d \times \R^d)$ 
having marginals $\mu_i \in \mathcal P_2(\R^d)$, $i=0,1$,
is a complete metric space. 
In complete metric spaces, we can consider absolutely continuous curves depending on the metric of the space.
In the above Wasserstein space, 
a weakly continuous curve $\mu: [0,1] \to \mathcal P_2(\R^d)$, $t\mapsto \mu_t$ is  \emph{absolutely continuous}, if there exists a Borel vector field 
$
v:[0,1]\times \R^d\to\R^d$ 
with 
$
\int_0^1 \| v_t \|_{L_2(\R^d,\R^d,\mu_t)} \, \d t < \infty
$ 
such that the \emph{continuity equation} 
\begin{equation}\label{wass}
\partial_t\mu_t +  \nabla\cdot(\mu_t v_t) = 0
\end{equation}
is fulfilled in the sense of distributions
\footnote{$
\int_0^1 \int_{\R^d} \partial_t\phi +\langle \nabla_x\phi,v_t\rangle \, \d\mu_t\d t=0 ~ \text{ for all } \phi\in C_c^{\infty}\left((0,1)\times \R^d \right)
$
}
see, e.g. \cite{AGS2008}.
While for fixed $\mu_t$ there are many vector fields such that the continuity equation is fulfilled,
there exists a unique one such that $\|v_t\|_{L_2(\R^d, \mu_t)}$ becomes minimal for almost every $t \in [0,1]$.
In other words, if $\mu_t$ fulfills the continuity equation for some vector field,
then there exists a unique vector field that solves
\begin{equation} \label{argmin}
\argmin_{v_t \in  L_2(\R^d,\R^d, \mu_t)} \int_{\R^d} \|v_t\|^2 \, \d \mu_t \quad \text{s.t.} \quad \partial_t\mu_t + \nabla\cdot(\mu_t v_t) = 0.
\end{equation}
Moreover, this optimal vector field is determined 
by the condition that $v_t \in T_{\mu_t} \mathcal P_2(\R^d)$ for almost every $t \in [0,1]$, with the \emph{regular tangent space}
\begin{align} \label{tan_reg}
    T_{\mu}\mathcal P_2(\R^d)
    &\coloneqq 
      \overline{
      \left\{ 
      \nabla \phi: \phi \in C^\infty_{\mathrm c}(\R^d) 
      \right\}}^{L_2(\R^d,\mu)},
\end{align}
see \cite[§~8]{AGS2008}. 
Finally, if $\mu_t$ is an absolutely continuous curve with Borel vector field $v_t$ such that for every compact Borel set $B \subset \R^d$ it holds 
$\int_0^1 \sup_B \|v_t\| + \text{Lip}(v_t,B) \, \d t < \infty$,
then there exists a solution $\varphi:[0,1] \times \R^d \to \R^d$ of
\begin{equation}\label{eq:flow_ode}
    \partial_t \varphi(t,x) = v_t(\varphi (t,x)), \quad \varphi(0,x) = x\\
\end{equation}
which fulfills $\mu_t = \varphi_{t,\sharp}\mu_0 \coloneqq \mu_0 \circ \varphi_t^{-1}$, see e.g. \cite[Proposition 8.1.8]{AGS2008}. 

\subsection{Wasserstein meets Fisher-Rao Flows}\label{section:meets}
In the following, we are interested in absolutely continuous probability measures 
with respect to the Lebesgue measure, i.e., the space 
$\mathcal P^{ac}(\R^d) \coloneqq \{\mu = \rho \, \d x: \rho \in L_1(\R^d)$, 
$\rho \ge 0$, 
$ \int_{\R^d} \rho \, \d x = 1  \}$
and we will address such measures just by their densities. 
Moreover, we will only consider Boltzmann/Gibbs densities depending on functions 
$f:[0,1]\times \R ^d \to \R$ with $e^{-f_t} \in L_1(\R^d)$ and $f(\cdot,x)\in C^1([0,1])$ for all $x\in \R^d$ and $\partial_t f(t,\cdot)\in L_2(\R^d,\rho_t)$
such that 
\begin{equation} \label{boltz}
\rho_t \coloneqq \frac{e^{-f_t}}{Z_t}, \quad Z_t \coloneqq \int_{R^d} e^{-f_t} \, \d x.
\end{equation}
In this case, straightforward computation gives
\begin{align} \label{fr1}
\partial \mu_t 
= \partial_t \rho_t 
&=- \left(\partial_t f_t  - \E_{\rho_t}[\partial_t f_t] \right)\rho_t,
\end{align}
where it maybe useful to note that
$$\E_{\rho_t}[\partial_t f_t] = -\partial_t \log Z_t.$$
This differential equation is a Fisher-Rao flow equation.
More precisely, a weakly continuous curve $\rho: [0,1] \to\mathcal P^{ac}(\R^d)$,
$t\mapsto \rho_t$ is a \emph{Fisher-Rao curve}, if there exists a measurable map $\alpha:[0,1] \times \R^d \to \R$ 
such that $\alpha_t := \alpha(t,\cdot) \in L_2(\R^d,\rho_t)$ and we have in a distibutional sense
\begin{equation} \label{fr}
\partial_t \rho_t = (\alpha_t - \bar \alpha_t) \rho_t, \quad
\bar \alpha_t \coloneqq \E_{\rho_t}[\alpha_t].
\end{equation}
Obviously, \eqref{fr1} delivers a Fisher-Rao curve with $\alpha_t \coloneqq -\partial_t f_t$.
For more information on Fisher-Rao flows see, e.g., \cite{GM2017,WL2022}.
We are interested in the
question under which conditions a Fisher-Rao curve determined by \eqref{fr1} is also an absolutely continuous curve in the Wasserstein space.
For later purposes, note that for Boltzmann densities \eqref{boltz}, the second summand of the continuity equation \eqref{wass} reads as
\begin{equation} \label{hi}
\nabla\cdot(\mu_t v_t) =  
\nabla\cdot(\rho_t v_t) = 
\left( -\langle \nabla f_t, v_t \rangle + \nabla\cdot v_t \right) \rho_t.
\end{equation}
Nevertheless, using \eqref{fr1}, for our Fisher-Rao curves the continuity equation becomes
\begin{equation} \label{frw}
\left(\partial_t f_t  - \E_{\rho_t}[\partial_t f_t] \right)\rho_t = \nabla\cdot(\rho_t ,v_t),
\end{equation}
and in the ideal case that we can find a vector field $v_t = \nabla s_t$, compare \eqref{tan_reg}, this 
can be rewritten as the family of Poisson equations
\begin{equation} \label{poisson}
\left(\partial_t f_t  - \E_{\rho_t}[\partial_t f_t] \right)\rho_t = \nabla\cdot(\rho_t \nabla s_t),
\end{equation}
which needs to be solved for $s_t$.
For \emph{fixed time} $t$, the solution of the Poisson equation has already been examined, e.g., in \cite{laugesen2014poissons}
and for more recent work, see also \cite{Reich2011, TM2023}.
However, the pointwise solution for each fixed $t$ does not ensure that $(\rho_t,\nabla s_t)$ gives rise to an absolutely 
continuous Wasserstein curve, since by definition this requires additional properties of $\nabla_x s:[0,1] \times \R^d \to \R^d$, globally in $t$: 
i) $\nabla_x s$ is Borel measurable on $[0,1]\times\R^d$, and ii)
$t\mapsto \|\nabla_x s(t,\cdot)\|_{L_2(\rho_t, \R^d)}^2\in L_1([0,1])$.
If also, iii) $\nabla_x s(t,\cdot) \in T_{\rho_t}\mathcal P_2(\R^d)$ for a.e. $t \in [0,1]$, then it is known that the velocity field $\nabla_x s$ is the \emph{optimal one} in the sense of \eqref{argmin}. To our best knowledge, the following two theorems, which are detailed in Appendix \ref{section:poisson}, tackle a \emph{rigorous} solution of this problem for the first time.

\begin{theorem} \label{thm:main-poisson} 
Assume that $\boldsymbol{\rho}$ determined by 
$\int_{[0,1]\times\R^d} f(t,x) \, \d \boldsymbol{\rho} = \int_0^1 \int_{\R^d} f(t,x) \, \d \rho_t \d t$
 satisfies a so-called partial Poincare inequality, see Definition \ref{poincare}, for some $K >0$, and that $\frac{1}{\boldsymbol{\rho}}$ is locally integrable. 
 Furthermore, suppose that $\alpha_t -\overline{\alpha_t}\in L_2([0,1] \times \R^d,\boldsymbol{\rho})$.
Then, there exists a unique weak solution $\boldsymbol{s}$ in a certain Hilbert space ${\mathcal{X}_0^1}$ of the problem
\begin{align}\label{weak}
        \int_0^1 \int_{\R^d} \langle \nabla \boldsymbol{s}_t, \nabla \psi_t \rangle \, \d \rho_t \d t =
        \int_0^1 \int_{\R^d} \psi_t \left(\alpha_t -\overline{\alpha_t}\right) \, \d \rho_t \d t, \quad \text{for all }\psi \in {\mathcal{X}_0^1}.
    \end{align}
\end{theorem}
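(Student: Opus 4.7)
The plan is to recognize the statement as a standard coercive variational problem in the Hilbert setting and apply the Riesz representation theorem (equivalently, Lax-Milgram, since the bilinear form and inner product coincide) on a suitable quotient/mean-zero Hilbert space constructed from the weighted Sobolev-type space associated to $\boldsymbol{\rho}$.

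First, I would carefully define $\mathcal{X}_0^1$. Natural candidates are (equivalence classes of) functions $\psi:[0,1]\times\R^d\to\R$ whose spatial gradient (in the distributional sense) lies in $L_2([0,1]\times\R^d,\R^d,\boldsymbol{\rho})$ and which satisfy the normalization $\overline{\psi_t}=\int_{\R^d}\psi_t\,\d\rho_t=0$ for a.e.\ $t$. The candidate inner product is
\begin{equation}
\langle \boldsymbol{s},\psi\rangle_{\mathcal{X}_0^1}\coloneqq \int_0^1\int_{\R^d}\langle\nabla \boldsymbol{s}_t,\nabla\psi_t\rangle\,\d\rho_t\d t.
\end{equation}
The local integrability of $1/\boldsymbol{\rho}$ lets me embed $\mathcal{X}_0^1$ into $L_{1,\mathrm{loc}}$, so that Cauchy sequences in the gradient norm correspond to genuine functions (not mere equivalence classes up to time-dependent constants in disguise), and the partial Poincar\'e inequality lets me control $\|\psi\|_{L_2(\boldsymbol{\rho})}$ by $\|\nabla\psi\|_{L_2(\boldsymbol{\rho})}$ on the mean-zero subspace, turning the seminorm into a genuine norm. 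Completeness then follows by standard arguments: take a Cauchy sequence, extract an $L_2(\boldsymbol{\rho},\R^d)$-limit of the gradients, use local integrability of $1/\boldsymbol{\rho}$ and the Poincar\'e control to extract a limit $\psi$ in $L_2(\boldsymbol{\rho})$, and check that its distributional gradient is the limit.

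Once $\mathcal{X}_0^1$ is established as a Hilbert space, the rest is short. The bilinear form on the left-hand side of \eqref{weak} is literally the inner product $\langle\boldsymbol{s},\psi\rangle_{\mathcal{X}_0^1}$, hence continuous, symmetric and coercive by definition. For the linear functional $L(\psi)\coloneqq \int_0^1\int_{\R^d}\psi_t(\alpha_t-\overline{\alpha_t})\,\d\rho_t\d t$, I would first observe the orthogonality trick
\begin{equation}
L(\psi)=\int_0^1\int_{\R^d}(\alpha_t-\overline{\alpha_t})(\psi_t-\overline{\psi_t})\,\d\rho_t\d t,
\end{equation}
which for $\psi\in\mathcal{X}_0^1$ reduces to $L(\psi)=\int(\alpha-\overline{\alpha})\psi\,\d\boldsymbol{\rho}$; then Cauchy-Schwarz together with the partial Poincar\'e inequality gives
\begin{equation}
|L(\psi)|\le \|\alpha-\overline{\alpha}\|_{L_2(\boldsymbol{\rho})}\,\|\psi\|_{L_2(\boldsymbol{\rho})}\le \sqrt{K}\,\|\alpha-\overline{\alpha}\|_{L_2(\boldsymbol{\rho})}\,\|\psi\|_{\mathcal{X}_0^1},
\end{equation}
showing $L\in(\mathcal{X}_0^1)^*$. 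The Riesz representation theorem (or Lax-Milgram) then delivers a unique $\boldsymbol{s}\in\mathcal{X}_0^1$ with $\langle \boldsymbol{s},\psi\rangle_{\mathcal{X}_0^1}=L(\psi)$ for every $\psi\in\mathcal{X}_0^1$, which is exactly \eqref{weak}.

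The main obstacle will not be the abstract variational step, which is essentially automatic once the ingredients are in place, but rather the careful construction of $\mathcal{X}_0^1$: verifying that the gradient seminorm, combined with the mean-zero condition, defines a complete Hilbert space of honest functions (or at least a well-defined distributional object whose gradient is the $L_2(\boldsymbol{\rho})$-limit). This is where both hypotheses of the theorem are used in an essential, non-cosmetic way, and where the details in Appendix \ref{section:poisson} presumably do the heavy lifting. A minor additional subtlety is checking that the mean $\overline{\psi_t}$ is well-defined for $\psi$ in the completion, which again relies on the integrability of $1/\boldsymbol{\rho}$ to make sense of $\psi_t$ on compact sets and then on the Poincar\'e control to push this globally.
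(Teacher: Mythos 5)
Your proposal follows essentially the same route as the paper: construct the Hilbert space $\mathcal{X}_0^1$ of zero-mean weighted Sobolev functions (with the paper going through the auxiliary space $H^1_x(\boldsymbol{\rho})$ and a cited theorem of Kufner to get completeness from $1/\boldsymbol{\rho}\in L_{1,\mathrm{loc}}$, then using the partial Poincar\'e inequality to make the gradient-only inner product equivalent on the closed zero-mean subspace), verify boundedness of the linear functional by Cauchy--Schwarz plus Poincar\'e, and apply the Riesz representation theorem. The ingredients, their roles, and the variational step are all the same; only the bookkeeping of the completeness argument differs slightly.
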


\begin{theorem}\label{thm:main-abs-cont}
Let $\rho_0,\rho_1 \in \mathcal P(\R^d)$ and assume that $\rho_t$ is a Fisher-Rao curve defined by \eqref{fr},
Let $\boldsymbol{\rho}$ and $\alpha$ satisfy the assumptions of Theorem \ref{thm:main-poisson}.
Then, $\rho_t$ is a Wasserstein absolutely continuous curve with vector field $\nabla_x \boldsymbol{s}$, where $\boldsymbol{s}\in{\mathcal{X}_0^1}$ is the unique weak solution of \eqref{weak}. If $\rho_t$ is bounded from above, it holds that $\nabla_x \boldsymbol{s}_t \in T_{\rho_t}\mathcal P_2(\R^d)$ for a.e. $t\in [0,1]$.
Furthermore, it holds that
\begin{align}\label{Wasserstein-FR-link}
     \int_0^1 \int_{\R^d} \|\nabla_x \boldsymbol{s}_t\|^2 \, \d \rho_t \d t 
     \leq K  \int_0^1 \int_{\R^d} \left(\alpha_t -\overline{\alpha_t}\right)^2 \, \d \rho_t \d t.
\end{align}
\end{theorem}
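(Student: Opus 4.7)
My plan is to take the unique weak solution $\boldsymbol{s} \in \mathcal{X}_0^1$ of \eqref{weak} guaranteed by Theorem~\ref{thm:main-poisson} and set $v_t := \nabla_x \boldsymbol{s}_t$. I then verify the three ingredients of Wasserstein absolute continuity listed above Theorem~\ref{thm:main-poisson}: Borel measurability of $v$ on $[0,1]\times\R^d$, the time integrability $\int_0^1 \|v_t\|_{L_2(\R^d,\R^d,\rho_t)}^2\,\d t < \infty$ (which upgrades to the $L_1$-in-time requirement from \eqref{wass} by Cauchy--Schwarz), and the continuity equation in the distributional sense. The bound \eqref{Wasserstein-FR-link} and the tangent-space property under boundedness of $\rho_t$ are the remaining two pieces.

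For \eqref{Wasserstein-FR-link} I would test the weak Poisson equation \eqref{weak} against $\psi = \boldsymbol{s}$ itself to obtain
\begin{equation*}
\int_0^1\!\!\int_{\R^d} \|\nabla_x \boldsymbol{s}_t\|^2 \,\d\rho_t\d t = \int_0^1\!\!\int_{\R^d} \boldsymbol{s}_t\,(\alpha_t-\overline{\alpha_t})\,\d\rho_t\d t \le \|\boldsymbol{s}\|_{L_2(\boldsymbol{\rho})}\,\|\alpha-\overline{\alpha}\|_{L_2(\boldsymbol{\rho})}
\end{equation*}
via Cauchy--Schwarz. Combined with the partial Poincaré inequality $\|\boldsymbol{s}\|_{L_2(\boldsymbol{\rho})}^2 \le K \|\nabla_x \boldsymbol{s}\|_{L_2(\boldsymbol{\rho})}^2$ from Definition~\ref{poincare}, dividing by $\|\nabla_x \boldsymbol{s}\|_{L_2(\boldsymbol{\rho})}$ and squaring yields \eqref{Wasserstein-FR-link}, hence the desired time integrability.

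For the continuity equation, I would combine the Fisher--Rao identity \eqref{fr} with \eqref{weak}: given $\phi\in C_c^\infty((0,1)\times\R^d)$, integration by parts in $t$ against \eqref{fr} gives $\int_0^1\!\int_{\R^d}\partial_t\phi\,\d\rho_t\d t = -\int_0^1\!\int_{\R^d}\phi\,(\alpha_t-\overline{\alpha_t})\,\d\rho_t\d t$. Once $\phi$ is shown to lie in $\mathcal{X}_0^1$, inserting $\psi = \phi$ into \eqref{weak} transforms the right-hand side into $-\int_0^1\!\int_{\R^d}\langle \nabla_x \phi, v_t\rangle\,\d\rho_t\d t$, which is precisely the distributional continuity equation recalled in the footnote to \eqref{wass}. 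For the tangent-space assertion, the density of $C_c^\infty((0,1)\times\R^d)$ in $\mathcal{X}_0^1$ supplies $\phi^n$ with $\nabla_x \phi^n \to v$ in $L_2(\boldsymbol{\rho})$; after passing to a subsequence, $\|\nabla_x \phi^n_t - v_t\|_{L_2(\R^d,\R^d,\rho_t)} \to 0$ for a.e.\ $t$, placing $v_t$ inside the closure \eqref{tan_reg}.

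The main obstacle is the handling of $\mathcal{X}_0^1$: one must confirm that $C_c^\infty((0,1)\times\R^d)$ sits inside $\mathcal{X}_0^1$ and is dense there, since both the continuity-equation step and the tangent-space step depend on this. The boundedness of $\rho_t$ enters precisely at the last step, ensuring that space--time $L_2(\boldsymbol{\rho})$-convergence can be transferred to a.e.-in-$t$ convergence in $L_2(\R^d,\rho_t)$; the interplay with the local integrability of $1/\boldsymbol{\rho}$ assumed in Theorem~\ref{thm:main-poisson} is the technical crux, and a careful slicing argument is where most of the work lies.
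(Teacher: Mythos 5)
Your overall scaffolding matches the paper's: set $v_t=\nabla_x\boldsymbol{s}_t$, verify Borel measurability, verify the $L_1$-in-time integrability via the Poincar\'e bound, derive the continuity equation from the Fisher--Rao identity and the weak Poisson equation, invoke the tangent-space characterization of the optimal field, and appeal to \cite[Theorem 8.3.1 and Proposition 8.4.5]{AGS2008}. Your derivation of \eqref{Wasserstein-FR-link} (testing \eqref{weak} with $\psi=\boldsymbol{s}$, Cauchy--Schwarz, then PPI) is correct and equivalent to what the paper extracts from the Riesz-representation bound in Theorem~\ref{main}.

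However, at the two points you yourself flag as ``the technical crux'', your proposed route does not close, and differs from the paper's in a way that matters. First, you want to insert $\psi=\phi\in C_c^\infty((0,1)\times\R^d)$ directly into \eqref{weak}, but such $\phi$ is in general \emph{not} in $\mathcal{X}_0^1$: the zero-mean constraint $\int_{\R^d}\phi(t,\cdot)\,\d\rho_t=0$ fails for generic test functions. The paper handles this via a mean-subtraction argument (the remark after Definition~\ref{def_weak}): setting $g(t)\coloneqq\E_{\rho_t}[\phi(t,\cdot)]$, one has $\phi-g\in\mathcal{X}_0^1$, $\nabla_x g=0$, and $\int g(t)(\alpha_t-\overline{\alpha_t})\,\d\rho_t\,\d t=0$, so \eqref{weak} extends to all $\phi\in C_c^\infty((0,1)\times\R^d)$. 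You would need some such device; ``once $\phi$ is shown to lie in $\mathcal{X}_0^1$'' asks for something false.

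Second, for the tangent-space assertion your strategy is a space--time density argument (density of $C_c^\infty((0,1)\times\R^d)$ ``in'' $\mathcal{X}_0^1$, then a subsequence to pass to a.e.\ $t$). Besides the same non-inclusion issue, this space--time density is not established anywhere and would require substantially more work. The paper instead argues slice-wise (Lemma~\ref{lem:in_tangent}): for a.e.\ fixed $t$, $\boldsymbol{s}_t\in H^1(\rho_t)$, and the upper boundedness of $\rho_t$ is used precisely to guarantee that $C_c^\infty(\R^d)$ is dense in $H^1(\rho_t)$, from which $\nabla_x\boldsymbol{s}_t\in T_{\rho_t}\mathcal{P}_2(\R^d)$ follows immediately from \eqref{tan_reg}. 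Note also that your explanation of where boundedness of $\rho_t$ enters is misplaced: $L_2(\boldsymbol{\rho})$-convergence already yields a.e.-in-$t$ convergence in $L_2(\rho_t)$ along a subsequence by Fubini, no boundedness required; the boundedness is needed earlier, to justify the density of $C_c^\infty(\R^d)$ in the weighted Sobolev space $H^1(\rho_t)$.
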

Note that by the {\rm Benamou-Brenier formula}, see \cite[Equation 8.0.3]{AGS2008}, the left-hand side of \eqref{Wasserstein-FR-link} is an upper bound for the Wasserstein distance $W_2^2(\rho_0,\rho_1)$, while the double-integral on the right-hand side defines the Fisher-Rao \emph{action} of $\rho_t$.

\section{Neural Sampling from Boltzmann Densities}
Our aim is to sample from a Boltzmann measure $\mu_D = \rho_D \,\d x$, 
where we only have access  to $f_D:\R^d \to \R$,
but not to the  normalization constant $Z_D$, 
by following a measure flow starting in a simple to sample  measure $\mu_Z = \rho_Z \,\d x$ like the standard Gaussian one.
We apply the continuity equation \eqref{frw} of Fisher-Rao curves, which can be rewritten by \eqref{hi} and division
by $\rho_t>0$ as
\begin{equation} \label{nice}
\partial_t f_t - \underbrace{\E_{\rho_t}[\partial_t f_t]}_{C_t}  = -\langle \nabla f_t, v_t \rangle + \nabla\cdot v_t .
\end{equation}
Indeed every spatially constant function $C_t$ which fulfills the above equation must be of the form 
$C_t = \E_{\rho_t}[\partial_t f_t]$, see \cite{máté2023learning}.
This enables us to learn $C_t$, so that \eqref{nice} does no longer depend on normalization constants.
We will deal with the following parameterizations  of $f_t$ and $v_t$ and how to learn them:
\begin{enumerate}
\item linear interpolation: $f_t \coloneqq (1-t) f_Z + t f_D$ ~$\hookrightarrow$  learn $v_t$,
\item learned interpolation by M\'at\'e and Fleuret: $f_t \coloneqq (1-t) f_Z + t f_D+ t(1-t) \psi_t$ ~$\hookrightarrow$ learn $(\psi_t,v_t)$,
\item gradient flow interpolation: $f_t \coloneqq \frac{T-t}{T}  f_D + t \psi_t$, $v_t \coloneqq \nabla (f_t-f_Z)$ ~$\hookrightarrow$ learn $\psi_t$.
\end{enumerate}
While by definition the first two settings interpolate between 
$\rho_Z$ and $\rho_D$ and $\rho_0 = \rho_Z$, $\rho_1 = \rho_D$, we will see that the newly proposed third variant interpolates between $\rho_D$ and $\rho_Z$ by the choice of $v_t$, i.e., $\rho_0 = \rho_D$ and 
$\lim_{T \to \infty} \rho_T = \rho_Z$. In practice we reparametrize the latter curve into unit time, see Remark \ref{remark:unit_time}.

\subsection{Linear \& Learned Interpolation}\label{31}
\paragraph{Linear interpolation.} The simplest interpolation is the linear one 
$f_t \coloneqq (1-t) f_Z + t f_D$, i.e., $\rho_t \propto \rho_0^{1-t}\rho_1^t$.
Under suitable conditions on the end points this density fulfills the assumptions of Theorem \ref{thm:main-abs-cont}, see Corollary \ref{appendix:apply-to-linear} in the Appendix, and is therefore a Wasserstein curve. By \eqref{fr1}, it also defines a Fisher-Rao curve, and the corresponding Fisher-Rao flow equation \eqref{fr} is given by
\begin{align}\label{eq:linear-fr}
   \partial_t \rho_t &=  \left(f_0 - f_1 - \E_{\rho_t}[f_0 - f_1] \right)\rho_t\\
   &=-\Big(\log(\frac{\rho_0}{\rho_1}) - \mathbb{E}_{\rho_t}[ \log(\frac{\rho_0}{\rho_1})] \Big)\rho_t.
\end{align}
Note that, $\rho_t$ is even a Fisher-Rao \emph{gradient flow} of the negative log likelihood, see also \cite{MM2024}, given by 
\begin{align}
    \mathcal{E}_{\rm NLL}(\rho):=  -\mathbb{E}_{\rho}\left[\log(\frac{\rho_1}{\rho_0})\right].   
\end{align}


There are a variety of approaches for approximating the vector field corresponding to the linear interpolation. 
In \cite{MM2024,nüsken2024steintransportbayesianinference} a kernel-based method for solving \eqref{poisson} was proposed. In \cite{máté2023learning} they propose to solve \eqref{nice} which in case of the linear interpolation becomes
\begin{align}  \label{heat_pde_f_constant}
 f_1 - f_0 -  C_t  +\langle \nabla f_t,v_t\rangle - \nabla\cdot v_t = 0.  
\end{align}
This equation has many solutions $v_t$
for the same $\rho_t$, and in order to enforce uniqueness we could demand that $v_t = \nabla s_t$. 

This approach leads to learning neural networks $(v^{\theta_1}_t, C^{\theta_2}_t)$ by minimizing the loss function
\begin{align}\label{loss}
L(\theta) 
&\coloneqq 
\mathbb E_{t \in \mathcal U [0,1], x \in \mathcal U [a,b]^d } \left[\mathcal E(\theta,x,t)\right], \\
\mathcal E(\theta,x,t) &\coloneqq
|f_1 - f_0 -  C_t^{\theta_2}  +\langle \nabla f_t,v_t^{\theta_1}\rangle - \nabla\cdot v_t^{\theta_1}|^{ 2}, \quad \theta \coloneqq  (\theta_1,\theta_2).
\end{align}

\begin{figure}[htbp]
    \centering
    \begin{minipage}{\textwidth}
        \centering
        \includegraphics[width=1\textwidth]{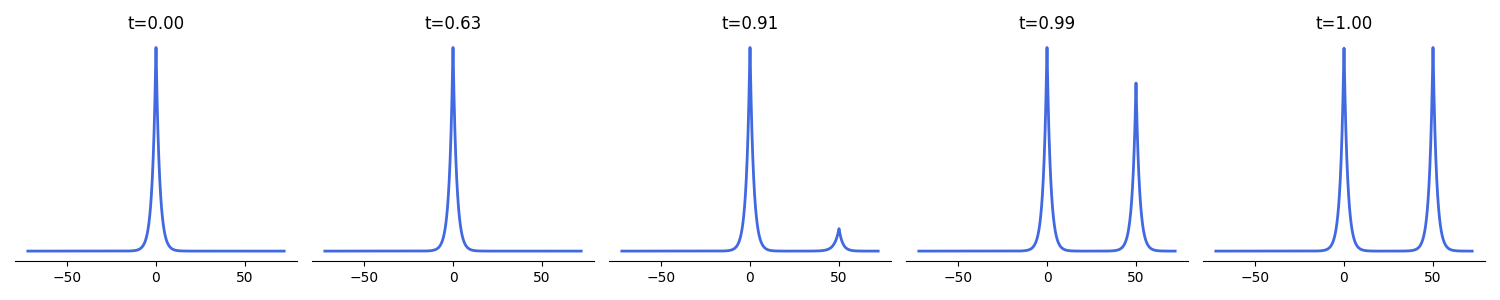}
    \end{minipage}

    \caption{Evolution of the probability densities $\rho_t \propto \rho_0^{1-t}\rho_1^t$, where $\rho_0 \propto e^{-|x|}$ and $\rho_1 \propto e^{-2\min\{|x|, |x-m|\}}$ for $m =50 $. For different values of $m$ see \ref{fig:density_evolution-appendix}.}
    \label{fig:density_evolution}
\end{figure}

\emph{A teleportation issue with the linear interpolation.}
\cite{máté2023learning} gave examples, where for an asymmetric constellation of target and prior measure, this path transports mass into distant modes only at very late times which may cause problems while learning the corresponding vector field. Mathematically, this corresponds to the fact that the vector field $v_t(x)$ develops a singularity in $x \in \R^d$ for late times $t$, see \cite[Figure 6]{máté2023learning}. In this work, we  demonstrate that this phenomenon is \emph{more severe}: even when averaging over the space $x$, the norm $\|v_t\|_{L_2(\rho_t)}$ of the optimal field \eqref{argmin} explodes for late times $t$. For an example in $\R^1$ involving a Laplacian $\rho_0\propto e^{-|x|}$ centered at $0$ and a Laplacian-like distribution $\rho_1 \propto e^{-2\min\{|x|, |x-m|\}}$ with a mode at $0$ and a second mode at $m > 0$, see Figure \ref{fig:density_evolution}, we are able to give an explicit and analytic formula for the calculation of $\|v_t\|_{L_2(\rho_t)}$ in the Appendix \ref{section:teleportation}. Using this formula \eqref{eq:analytic-representation}, we demonstrate that $\|v_t\|_{L_2(\rho_t)}$ explodes when $t$ is close to $1$, and that this explosion happens more severely and at later times, the further the second mode $m>0$ is away from $0$, see Figure \ref{fig:velocity-norm-evolution}. 

\begin{figure}[ht]
     \centering
    \begin{minipage}{0.24\textwidth}
        \includegraphics[width= \linewidth]{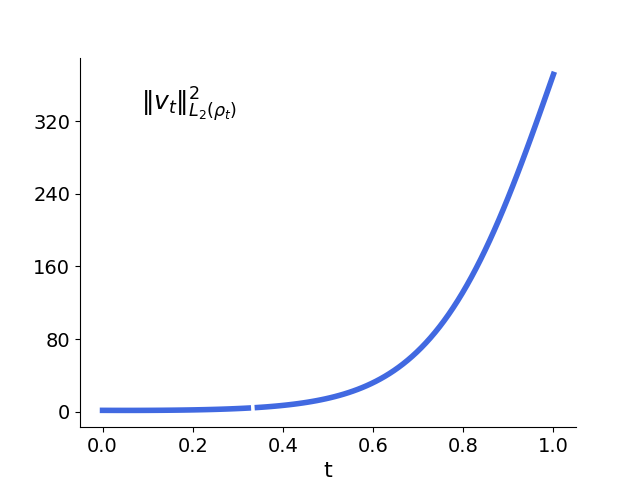} 
        \center{$m=5$}
    \end{minipage}%
    \hfill
    \begin{minipage}{0.24\textwidth}
        \includegraphics[width= \linewidth]{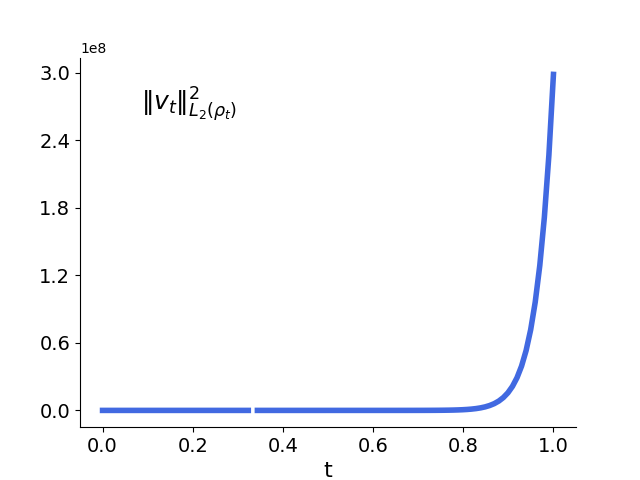} 
        \center{$m=15$}
    \end{minipage}%
    \hfill
     \begin{minipage}{0.24\textwidth}
        \includegraphics[width= \linewidth]{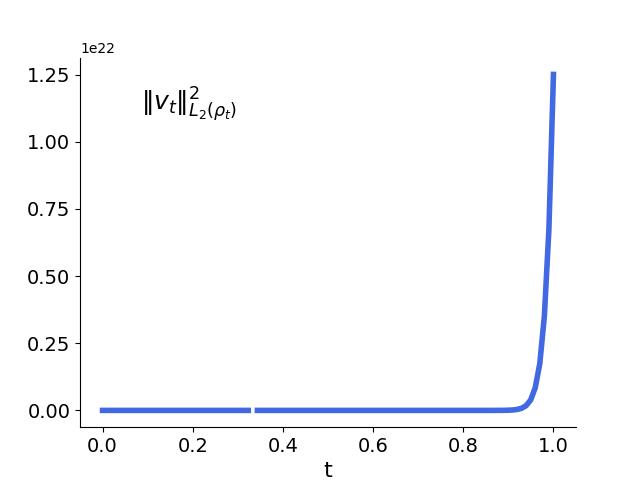} 
        \center{$m=30$}
    \end{minipage}%
    \hfill
    \begin{minipage}{0.24\textwidth}
        \includegraphics[width= \linewidth]{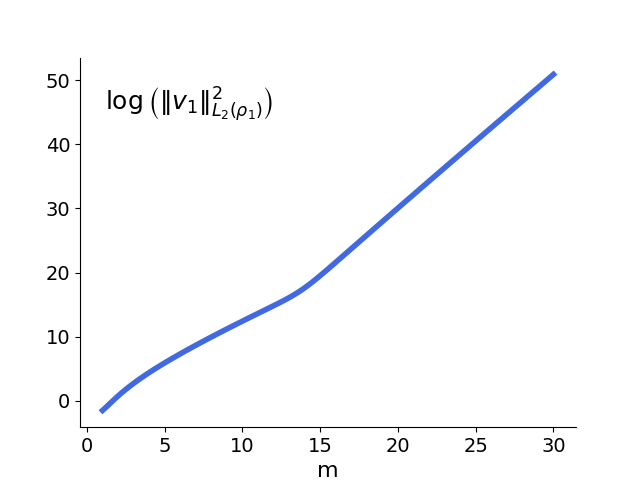} 
        \center{$t=1$}
    \end{minipage}%
    \caption{In the first three figures, the evolution of $\|v_t\|_{L_2(\rho_t)}^2$ belonging to the path in Figure \ref{fig:density_evolution} is depicted for different values of the second mode $m>0$. For larger $m$, the norm $\|v_t\|_{L_2(\rho_t)}^2$ approaches the limit $\|v_1\|_{L_2(\rho_1)}^2$ at later times, hence with a steeper slope. The last figure shows the \emph{log scale} of $\|v_1\|_{L_2(\rho_1)}^2$ depending on $m$. It demonstrates that $\|v_1\|_{L_2(\rho_1)}$ roughly grows \emph{exponentially} with $m$.} 
    \label{fig:velocity-norm-evolution}
\end{figure}

\paragraph{Learned Interpolation.}\label{32} 
To cope with the above disadvantages of the linear interpolation, \cite{máté2023learning} proposed to parameterize also the flow by $f_t \coloneqq (1-t) f_0 + t f_1+ t(1-t) \psi_t$ with a neural network $\psi_t^{\theta_1}$
and to minimize according to \eqref{nice} the above loss with
\begin{align}
\mathcal{E}(\theta,x,t):=|\partial_t f^{\theta_1}_t(x) - C^{\theta_3}_t + \langle \nabla f^{\theta_1}_t,v^{\theta_2}_t\rangle -\nabla\cdot v^{\theta_2}_t|^2, \quad \theta \coloneqq (\theta_1,\theta_2,\theta_3).
\end{align}
Since the approximation of high dimensional integrals is difficult, for both the linear and learned interpolation \cite{máté2023learning} propose to use an iterative approach by solving the ODE  $\partial_t \varphi_t^{\theta'}=v^{\theta'}_t(\varphi_t^{\theta'})$, $\varphi_0^{\theta'}=\mathrm{Id}$ for $\theta'$ from the previous minimization and to solve
\begin{align}
L(\theta)\coloneqq \E_{t\in \mathcal U[0,1],z\sim \rho_0}[\mathcal{E}(\theta,\varphi_t^{\theta'}(z),t)].
\end{align}
Heuristically this penalizes $\mathcal{E}(\theta,x,t)$ stronger in regions where the intermediate solution has more mass. Finally, sampling can be done in both cases -- linear and learned interpolation -- assuming sufficient regularity of the learned $v_t^{\theta_i}$, by simulating the ODE \eqref{eq:flow_ode}. Note that in general there are infinitely many solutions of \eqref{heat_pde_f_constant} and neither the parameterized curve $\rho_t$ nor the vector field $v_t$ is unique. Further $v_t^{\theta_2}$ obtained in this way may not be minimal in any sense. The proof of the validity of the interpolation with respect to Theorem \ref{thm:main-abs-cont}
is future work.


\subsection{Gradient Flow Interpolation }
Recall that the linear interpolation of the energies $f_0, f_1$ leads to a multiplicative interpolation of densities $\rho_t(x) \propto \rho_Z(x)^{1-t}\rho_D(x)^t$. Note that this does not take account of any \emph{local} behaviour of $\rho_Z, \rho_D$ in a neighbourhood of $x$, resulting in irregular behavior of the vector field as shown in Figure \ref{fig:density_evolution} and \ref{fig:velocity-norm-evolution}.
In contrast, widely used paths in generative modelling \cite{song2021scorebasedgenerativemodelingstochastic,lipman2023flow} originate from an interpolation of densities via (spatial) convolutions 
$\rho_t \sim \rho_Z(\frac{\cdot}{a_t}) \ast \rho_D(\frac{\cdot}{b_t})$ for time schedules $a_t,b_t$, corresponding to an interpolation $X_t =a_t Z+b_t X_0$ of associated independent random variables $Z\sim \rho_Z, X_0\sim \rho_D$ on the \emph{spatial level}. We aim to learn such a curve and an associated vector field.

In this paper, we propose to use $f_t \coloneqq \frac{T-t}{T} f_D + t \psi_t$ and $v_t \coloneqq \nabla (f_t -  f_Z)$
and to parameterize $\psi_t$ by a neural network. 
Then, equation \eqref{nice} becomes
\begin{align}\label{eq:kl_const}
\partial_t  f_t - \E_{\rho_t}[\partial_t {f}_t] + \langle \nabla {f}_t,\nabla({f}_t - f_Z)\rangle - \Delta({f}_t - f_Z)=0.
\end{align}
Indeed, we will show in the paragraph below  that this equation makes perfect sense,  in particular, it has a solution for $\rho_Z \sim \mathcal{N}(0,1)$.
Then we can compute $(\psi_t^{\theta_1}, C_t^{\theta_2})$ for $f_t^{\theta_1} \coloneqq \frac{T-t}{T} f_D + t \psi_t^{\theta_1}$ by minimizing the loss
$L$ in \eqref{loss} with the function
\begin{align}
\mathcal E(\theta,x,t)\coloneqq |\partial_t f_t^{\theta_1} - C_t^{\theta_2}+\langle \nabla f_t^{\theta_1},\nabla(f_t^{\theta_1} - f_Z)\rangle - \Delta(f^{\theta_1}_t - f_Z)|^2,
\quad \theta \coloneqq (\theta_1,\theta_2).
\end{align}
Finally, to sample from the target distribution,
we simulate the ODE \eqref{eq:flow_ode} backwards on $[0,T]$
to obtain $\tilde{\varphi}: [0,T]\times \R^d \to \R^d$ satisfying
\begin{equation}
    \partial_t {\tilde \varphi}(t,x) = - v_t({\tilde \varphi} (t,x)), \quad {\tilde \varphi}(T,x) = x    
\end{equation}
which fulfills $\mu_t = \tilde {\varphi}_{T-t,\sharp} \mu_Z$.

\paragraph{Relation to Wasserstein gradient flows.}
Equation  \eqref{eq:kl_const} is well-known in connection with Wasserstein gradient flows.
A \emph{Wasserstein gradient flow} is an absolutely continuous curve
whose vector field
$v_t = -\text{grad}_{\rho_t} F \in T_{\rho_t} (\mathcal P_2(\R^d) )$
is given by the negative Wasserstein gradient (or more generally, \emph{subdifferential}) of a function $F: \mathcal P_2(\R^d) \to \R$.
Then, the continuity equation 
\begin{equation}\label{wass_grad}
\partial_t\rho_t   - \nabla \cdot \left(\rho_t \text{grad}_{\rho_t} F \right) = 0
\end{equation}
produces a flow towards a minimizer  of $F$. 
A prominent example, see, e.g. \cite[Chapter 10.4]{AGS2008}, is 
$F(\rho) \coloneqq \text{KL}(\rho|\rho_Z)$ arising from the Kullback-Leibler divergence
which has the unique minimizer $\rho_Z$. Since it is easy to show, see \cite[Lemma 10.4.1]{AGS2008},
that $\text{grad}_{\rho_t} F = \nabla \log\frac{\rho_t}{\rho_Z}$ , we can be rewrite
\eqref{wass_grad}  as
\begin{align}\label{eq:langevin_fp}
\partial_t \rho_t = - \nabla \cdot \left(\rho_t(\nabla \log\rho_Z - \nabla \log\rho_t) \right), \quad \rho_0 = \rho_D,
\end{align} 
which is also known as the Fokker-Planck equation.
Inserting $\rho_t \coloneqq e^{-f_t}/Z_t$ results exactly in \eqref{eq:kl_const}. The existence and uniqueness of a solution of \eqref{eq:langevin_fp} is shown in \cite[Theorem 6.6]{ambrosio2007gradient}.
However, it is not clear if Boltzmann densities stay Boltzmann densities. But in the special case, when  $\rho_Z$ is a Gaussian, this is the case as we will see next.
\\
There is  a close relation between \eqref{eq:langevin_fp} and SDEs, namely the Langevin SDE
\begin{align}\label{langevin}
\d X_t = \nabla \log \rho_Z(X_t)\d t + \sqrt{2}\d B_t
\end{align}
corresponds to the Fokker-Planck equation \eqref{eq:langevin_fp}, see e.g., \cite[Chapter 6.2]{chewi2024statistical}, i.e., $X_t \sim \rho_t$. For  $\rho_Z \propto \mathcal{N}(0,I_d)$, 
\eqref{langevin} becomes 
$\d X_t = - X_t \d t+ \sqrt{2}\d B_t$. 
This is an Ornstein-Uhlenbeck process, which has the closed form solution 
$$X_t=e^{-t}Z + e^{-t}X_0, \quad Z \sim \mathcal{N}\left(0,(e^{2t}-1) I_d \right), \quad X_0 \sim \rho_D,
$$
where $Z$ is  independent from $X_0$, see \cite[Eq. (2.4)]{herrmann2019exit}. 
From the closed form solution it follows that 
$\rho_{t}
\sim 
\mathcal{N}\left(0,(1- e^{-2t})I_d \right)
*
\rho_D\left(e^t\cdot\right) > 0
$, and thus $\rho_t \propto e^{-f_t}$ which implies that \eqref{eq:kl_const} has a solution.

\begin{remark}\label{remark:unit_time}
The dynamics of both the linear and learned interpolations are on the unit interval. To obtain dynamics on the unit interval for the gradient flow interpolation we apply an  approach from the score based diffusion literature \cite{song2021scorebasedgenerativemodelingstochastic}. We use a linear time schedule  $\beta(t):= \beta_{\text{min}} + t \left(\beta_{\text{max}} -\beta_{\text{min}}\right)$ and the associated SDE
\begin{align}\label{eq:VP_SDE}
\d X_t = -\frac{1}{2}\beta(t)X_t \d t + \sqrt{\beta(t)} \d B_t \quad X_0 \sim \rho_1.
\end{align}
The SDE \eqref{eq:VP_SDE} has a closed form solution given by 
\begin{equation} \label{closed_form_beta}
    X_t= \sqrt{1-e^{-g(t)}} Z + e^{\frac{-g(t)}{2}} X_0,
\end{equation}
where $g(t):= \int_0^t \beta(s) \, \d s$ and $Z \sim \mathcal{N}(0, I_d)$. 
As done in \cite{song2021scorebasedgenerativemodelingstochastic} we choose $\beta_{\text{min}}= 0.1$ and $\beta_{\text{max}}=20$. These choices ensure that the distribution of $X_1$ is approximately $\mathcal{N}(0, I_d)$. 
We then aim to learn the solution of the associated Fokker-Planck equation, this corresponds to setting $v_t \coloneqq \frac{\beta(t)}{2} \nabla (f_t - \frac{\|\cdot\|^2}{2})$.
\end{remark}

\section{Experiments} \label{sec:exp}
In this section we apply the different approaches to common sampling problems. We compare the performance of using the linear, learned or gradient flow interpolation. 

For both the linear interpolation and the learned interpolation, we can choose to learn the vector field directly, or learn a potential and obtain the vector field as its gradient. In practice, we choose to parameterize it directly as in \cite{máté2023learning}. Note that in practice for the gradient flow interpolation we apply Remark \ref{remark:unit_time}. Furthermore we improve performance by  using learned scheduling function $g(t)$, satisfying $g(0) = 1$ and $g(1) = 0$, and parametrizing the corresponding $f_t$ as $f_t:= g(t)f_D + t \psi_t$ .

A major design choice is at which points to evaluate the pointwise error $\mathcal E(\theta,x,t)$. The approach taken in \cite{máté2023learning} is to use samples along the trajectory induced by the current estimate of $v_t$.
For the gradient flow interpolation, we chose a different approach. We sample points uniformly from a large enough domain and interpolate them with samples from our latent distribution according to the formula given in \eqref{closed_form_beta}, where we replace the samples from the data distribution with random uniform ones. For the linear and learned interpolation we tried similar approaches sampling the points from scaled uniform domains. However, we found in our experiments that the method employed in \cite{máté2023learning} led to better performance.

In the following, we evaluate the methods using the effective sample size, negative log likelihood and the energy distance \cite{szekely_energy}. See \ref{implementation} for further details on the implementation and evaluation.

\paragraph{Gaussian Mixture Model.}
We use the experimental setup from \cite{midgley2023flowannealedimportancesampling}. The target distribution consists of a mixture of 40 evenly weighted Gaussians in 2 dimensions. The means are distributed uniformly over $[-40,40]^2$.
\begin{figure}[ht]
     \centering
     \begin{minipage}{0.24\textwidth}
        \includegraphics[width= \linewidth]{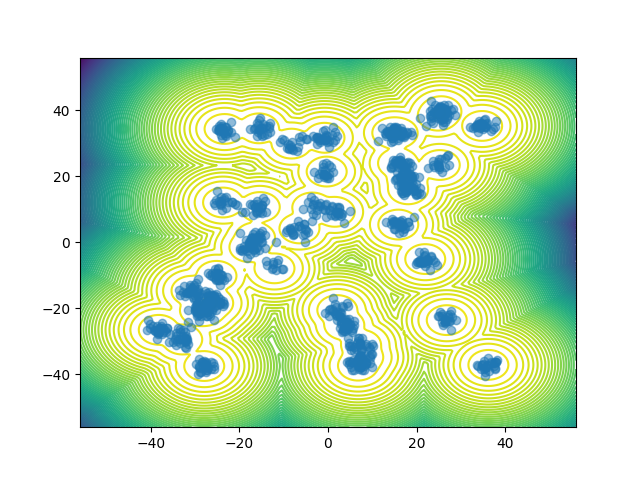} 
        \center{ground truth }
    \end{minipage}%
    \begin{minipage}{0.24\textwidth}
        \includegraphics[width= \linewidth]{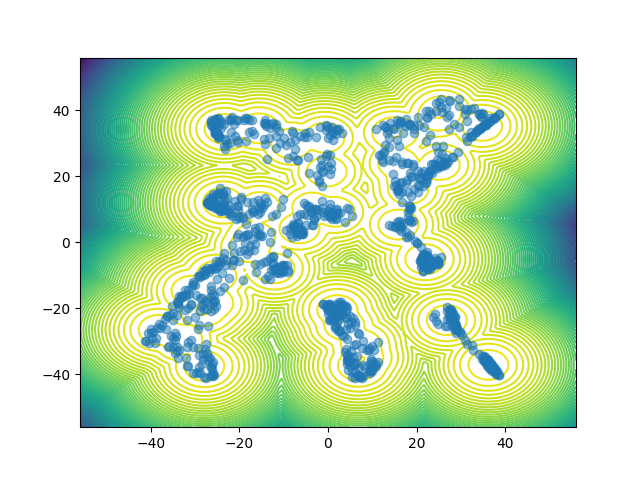} 
        \center{linear }
    \end{minipage}%
    \hfill
    \begin{minipage}{0.24\textwidth}
        \includegraphics[width= \linewidth]{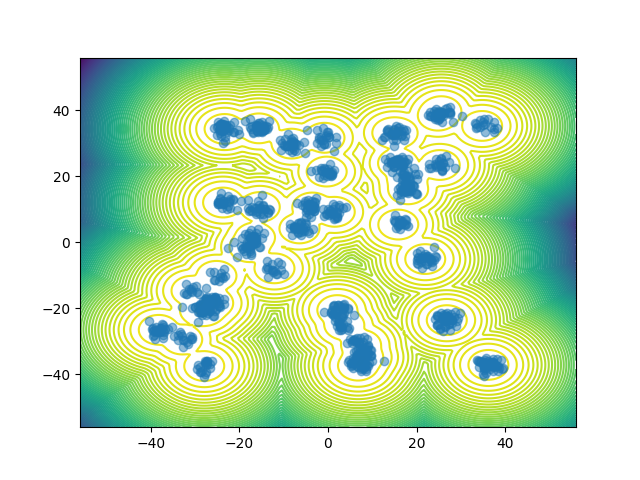} 
        \center{learned }
    \end{minipage}%
    \hfill
     \begin{minipage}{0.24\textwidth}
        \includegraphics[width= \linewidth]{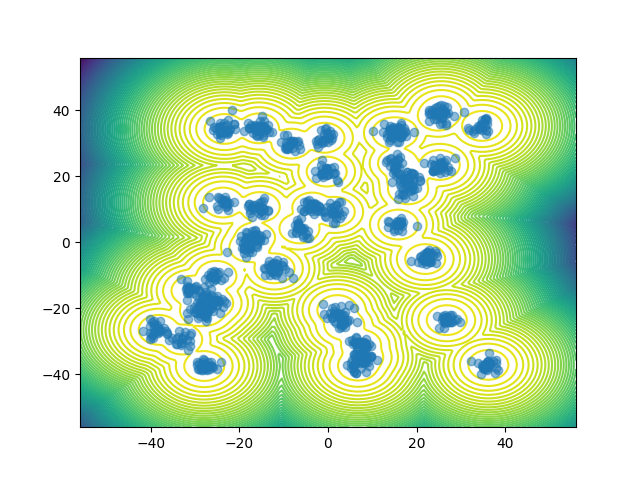} 
        \center{gradient flow }
    \end{minipage}%
    \caption{Results for a Gaussian Mixture Model with 40 modes. For the linear and learned interpolation we showed the results for $\sigma=30$, for the gradient flow interpretation we used $\sigma=1$.}
    \label{fig:gmm}
\end{figure}
In Figure \ref{fig:gmm} we plot the countour lines of the target distribution as well as 1000 samples from the simulated flow and the ground truth, respectively.
\begin{table}[ht]
\centering
\begin{tabular}{cccc}
\hline
& ESS \(\uparrow\) & NLL \(\downarrow\) & Energy Distance \(\downarrow\) \\ \hline
linear  ($\sigma=30$)        & 0.183{\scriptsize $\pm$ 0.05} & 8.657 {\scriptsize $\pm$ 0.012} & 0.247{\scriptsize $\pm$ 0.016} \\
learned ($\sigma=20$)& 0.605 {\scriptsize $\pm$ 0.316} & 6.894{\scriptsize $\pm$ 0.005} & 0.058{\scriptsize $\pm$ 0.006} \\
learned ($\sigma=30$)     & 0.922{\scriptsize $\pm$ 0.002} & 6.913 {\scriptsize $\pm$ 0.003} & 0.121{\scriptsize $\pm$ 0.009} \\
gradient flow ($\sigma=1$)  & 0.992 {\scriptsize $\pm$ 5e-5} & 6.864 {\scriptsize $\pm$ 0.005} & 0.0021 {\scriptsize $\pm$ 0.001} \\ \hline
\end{tabular}
\caption{Comparison of effective sample size (ESS), negative log likelihood (NLL), and the energy distance for different interpolations. Arrows indicate if ascending/descending values are better.}
\end{table}

For this experiment an important factor is the choice of latent distribution. We restricted ourselves to using a Gaussian latent distribution. However as outlined in Section \ref{31} the linear interpolation faces difficulties when some modes are far away from the latent while others are close. In order to mitigate this problem, we experimented with using varying standard deviations $\sigma$; the impact of this choice is visualized in \ref{std_dev}. We found that using a mode covering Gaussian led to the best results. In contrast, the gradient flow interpolation works well when starting in a standard Gaussian which is what we do.
\paragraph{Many Well Distribution.}
The target is a $8$-dimensional many well distribution and we use the implementation of the target energy from \cite{midgley2023flowannealedimportancesampling}. More precisely we consider the $4$-fold product distribution of the two dimensional density given by
\begin{align}
\log p(x_1,x_2)= -x_1^4 + 6x_1^2 + \frac 12 x_1 -\frac 12 x_2^2 + \text{constant}
\end{align}
as target distribution. The results in Table \ref{table:many_well} show that both the learned and gradient flow interpolations significantly outperform the linear interpolation. The learned interpolation works especially well in this setting. Here the choice of latent distribution is not as relevant as starting in a standard Gaussian is sufficient for all methods, since the modes of the many well distribution are concentrated close to zero.

\begin{table}[ht]
\centering
\begin{tabular}{cccc}
\hline
& ESS \(\uparrow\) & NLL \(\downarrow\) & Energy Distance \(\downarrow\) \\ \hline
linear         & 0.985{\scriptsize $\pm$ 0.016} & 9.689 {\scriptsize $\pm$ 0.01} & 0.278 {\scriptsize $\pm$ 0.003} \\
learned        & 0.999 {\scriptsize $\pm$ 1e-5} & 6.876 {\scriptsize $\pm$ 0.01} & 9e-5  {\scriptsize $\pm$ 3e-5} \\
gradient flow  & 0.991 {\scriptsize $\pm$ 3e-4} & 6.989 {\scriptsize $\pm$ 0.01} & 8e-4 {\scriptsize $\pm$ 1e-4} \\ \hline
\end{tabular}
\caption{Comparison of effective sample size, negative log likelihood, and the energy distance for different interpolations.}\label{table:many_well}

\end{table}

\section{Conclusions}
\paragraph{Discussion.} 
We studied different paths in the probability space in order to sample from unnormalized densities and compared their behaviour and computibility. The linear interpolation of energies suffers from an exploding vector field, and learning the vector field \emph{and} energy path simultaneously (\emph{learned interpolation}) in practice solves this problem. However, there is no theoretical knowledge about the path which we want to approximate and in turn no conclusions about the regularity of the corresponding vector field can be drawn. In contrast, the proposed gradient flow approach allows for a thorough mathematical description of the velocity field and the path itself, opening the doors for a rigorous analysis via Wasserstein gradient flows and Fokker-Planck equations. We show numerical performance on sampling from Gaussian mixture models and a many well distribution. Furthermore we close a gap in the literature by showing that a large class of Fisher Rao curves and in particular curves of Boltzmann densities are in fact absolutely continuous curves in the Wasserstein geometry. Hence the existence of a corresponding vector field driving the evolution is implied, justifying sampling from the target density via solving an ODE.

\paragraph{Limitations and Outlook.} The methods proposed require knowledge of the domain of the target distribution. Furthermore solving PDEs in high dimensional settings remains challenging. These issues can be addressed by developing more efficient strategies to sample points at which to evaluate the loss $\mathcal E(\theta,x,t)$, such as combining uniform sampling and trajectory based sampling. From a theoretical point of view further investigation of the behaviour and regularity of the gradient flow path is needed, especially the behaviour of the norm $\|v_t\|_{L_2(\rho_t)}$ would be of interest.

\bibliography{references}
\bibliographystyle{iclr2025_conference}

\appendix
\counterwithin{theorem}{section} 
\section{Appendix}
\subsection{Wasserstein meets Fisher-Rao: Solution of a family of Poisson Equations}\label{section:poisson}
We will show that under mild assumptions on $f$, the Fisher-Rao curve $\rho_t$ from \eqref{fr1} is also a Wasserstein absolutely continuous curve with a tangent vector field $\nabla s_t$\footnote{Overall, let $\nabla := \nabla_x$ denote differentiation in direction $x \in \R^d$.}. In order to do that, we will describe $s_t$ as the unique solution of a Poisson equation in a suitable Hilbert space ${\mathcal{X}_0^1}$.
More precisely, we show that there exists a unique $s\in L_2([0,1] \times \R^d; \boldsymbol{\rho})$ (with $\boldsymbol{\rho}$ given by Definition \ref{def:disintegration}) such that $\nabla s_t:=\nabla s(t,\cdot)$ fulfills the Poisson equation
\begin{align} \label{NLL_PDE}
\nabla\cdot(\rho_t\nabla s_t)=\Big( \partial_t f - \mathbb{E}_{\rho_t}\left[ \partial_t f\right]\Big)\rho_t
\end{align}
in a distributional sense in ${\mathcal{X}_0^1}$. 
For fixed time $t$, this has already been done, e.g., in \cite{laugesen2014poissons}. But note that a pointwise-in-time treatment does not allow us to conclude \textit{Wasserstein absolute continuity} of our curve $\rho_t$. Therefore, we generalize their methods to a \textit{global-in-time} ansatz; in particular, under suitable assumptions on $f$, we find a unique weak solution $s$ of \eqref{NLL_PDE} such that 
\begin{enumerate}
    \item $\nabla_x s$ is Borel measurable on $[0,1]\times\R^d$,
    \item $t\mapsto \|\nabla_x s(t,\cdot)\|_{L_2(\rho_t)}^2\in L_1([0,1])$, ~i.e., $\int_0^1 \|\nabla_x s(t,\cdot)\|_{L_2(\rho_t)}^2 \, \d t < \infty$,
    \item $\nabla_x s(t,\cdot) \in T_{\rho_t}\mathcal P_2(\R^d)$ for a.e. $t \in [0,1]$.
\end{enumerate}
These conditions ensure that the Fisher-Rao curve \eqref{fr1} is absolutely continuous in the Wasserstein sense, and that
the velocity field $\nabla_x s$ is the \textit{optimal one} fulfilling the continuity equation w.r.t. our curve $\rho_t$ as described in \eqref{argmin}.

As a by-product, we obtain the estimate
\begin{align}
     \int_0^1 \int_{\R^d} \|\nabla_x s(t, \cdot)\|^2 \, \d \rho_t \d t \leq K  \int_0^1 \int_{\R^d} \left(\alpha_t -\overline{\alpha_t}\right)^2 \,\d \rho_t \d t,
\end{align}
linking (an upper bound of) the Wasserstein-2 distance $W_2^2(\rho_0, \rho_1)$ on the one side, and the Fisher-Rao action on the other.

Next we will define the appropriate spaces for our analysis.

\subsection{Prerequisites for solving the Poisson equation}\label{section:interpolation} 
\begin{definition}\label{def:disintegration}
 Let $t\mapsto\rho_t$ be a weakly continuous curve $[0,1]\to\Prob^{\rm ac}(\R^d)$. Then, for all Borel measurable $B\subseteq \R^d$, the map $t\mapsto \rho_t(B)$ is measurable: Indeed, it is the pointwise limit of continuous, hence measurable maps
 $t\mapsto \int_{\R^d}f_k \, \d\rho_t$ with appropriate $f_k$. Thus, $\boldsymbol{\rho}$ defined by the disintegration $\int_{[0,1]\times\R^d}f(t,x) \, \d \boldsymbol{\rho}=\int_0^1\int_{\R^d}f(t,x) \,\d\rho_t\d t$, is a probability measure on $[0,1]\times\R^d$, see e.g., \cite[Chapter 5.3]{AGS2008}.
\end{definition}

\begin{definition}
Let $\rho \, \d x\in\Prob^{\rm ac}(\R^d)$ with density $\rho$. We consider the Sobolev space
\begin{align}
H^1(\rho):=\Big\{f\in L_2(\R^d, \rho): \text{ the weak derivative } \partial_{x_i}f\text{ (w.r.t. $\mathcal{L}$) exists and }\\
\partial_{x_i}f\in L_2(\rho), ~ \forall i\in\{1,\ldots,d\} \Big\}
\end{align}
with inner product $\langle f,g\rangle_{H^1(\rho)}:=\langle f,g\rangle_{L_2(\rho)}+\langle \nabla f,\nabla g\rangle_{L_2(\rho,\R^d)}$.
Furthermore, define the space of Sobolev functions with zero mean by
\begin{align}
H_0^1(\rho):= \left\{f \in H^1(\rho): \int_{\R^d} f \, \d\rho = 0\right\}.
\end{align}
\end{definition}

\begin{lemma}
Assume that $\rho \, \d x \in\Prob^{\rm ac}(\R^d)$ and $\frac{1}{\rho}\in L_{1, {\rm loc}}(\R^d)$. Then, $(H^1(\rho),\langle\cdot,\cdot\rangle_{H^1(\rho)})$ is a Hilbert space and $H_0^1(\rho)$ is a closed subspace.
\end{lemma}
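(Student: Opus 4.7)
The plan is to verify completeness of $H^1(\rho)$ first, and then derive closedness of $H_0^1(\rho)$ as a corollary by exhibiting it as the kernel of a bounded linear functional.

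For completeness, I would start with a Cauchy sequence $(f_n)_n$ in $H^1(\rho)$. By definition of the norm, $(f_n)_n$ is Cauchy in $L_2(\R^d,\rho)$, and $(\nabla f_n)_n$ is Cauchy in $L_2(\R^d,\R^d,\rho)$. Since these are complete, there exist $f\in L_2(\R^d,\rho)$ and $g\in L_2(\R^d,\R^d,\rho)$ with $f_n\to f$ and $\nabla f_n\to g$ in the respective $L_2(\rho)$-spaces. The crucial step is to identify $g$ with the weak (Lebesgue) derivative of $f$, and this is exactly where the assumption $1/\rho\in L_{1,{\rm loc}}(\R^d)$ enters.

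The key estimate is the following: for every compact $B\subset\R^d$ and every $h\in L_2(\R^d,\rho)$, Cauchy--Schwarz gives
\begin{equation}
\int_B |h|\,\d x \;=\; \int_B |h|\,\rho^{1/2}\rho^{-1/2}\,\d x \;\leq\; \|h\|_{L_2(\R^d,\rho)} \Bigl(\int_B \tfrac{1}{\rho}\,\d x\Bigr)^{1/2},
\end{equation}
so $L_2(\rho)$-convergence implies $L_{1,{\rm loc}}(\R^d,\d x)$-convergence. Applied to $f_n-f$ and to each component of $\nabla f_n - g$, this yields $f_n \to f$ and $\nabla f_n \to g$ in $L_{1,{\rm loc}}(\R^d,\d x)$. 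Now for any test function $\phi\in C_c^\infty(\R^d)$ and any $i\in\{1,\ldots,d\}$, passing to the limit in the defining identity $\int f_n\,\partial_{x_i}\phi\,\d x = -\int (\partial_{x_i}f_n)\,\phi\,\d x$ gives $\int f\,\partial_{x_i}\phi\,\d x = -\int g_i\,\phi\,\d x$, so $g=\nabla f$ in the weak (Lebesgue) sense. Hence $f\in H^1(\rho)$ with $f_n\to f$ in the $H^1(\rho)$-norm.

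For the closedness of $H_0^1(\rho)$, since $\rho\,\d x$ is a probability measure, the linear functional $\Lambda: H^1(\rho)\to\R$, $f\mapsto \int_{\R^d} f\,\d\rho$ satisfies $|\Lambda(f)|\leq \|f\|_{L_2(\R^d,\rho)}\leq \|f\|_{H^1(\rho)}$ by Cauchy--Schwarz, so $\Lambda$ is bounded, and $H_0^1(\rho)=\ker\Lambda$ is a closed subspace. The only nontrivial point is the identification of weak derivatives under the limit, which is handled by the local-integrability assumption on $1/\rho$; the rest is formal.
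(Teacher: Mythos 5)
Your proof is correct and follows essentially the same route as the paper: the paper delegates the completeness of $H^1(\rho)$ to \cite[Theorem 1.11]{kufner1984define}, and your Cauchy--Schwarz estimate showing that $1/\rho\in L_{1,\mathrm{loc}}$ forces the continuous embedding $L_2(\R^d,\rho)\hookrightarrow L_{1,\mathrm{loc}}(\R^d)$ (hence the identification of the limiting weak derivative) is precisely the argument behind that cited result. Your closedness argument via the bounded functional $f\mapsto\int f\,\d\rho$ is the same as the paper's, just phrased as a kernel statement rather than an explicit triangle-inequality estimate.
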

\begin{proof}
The first claim is \cite[Theorem 1.11]{kufner1984define}.
For the second claim note that for $f_n\to f$ in $H^1(\rho)$ with $f_n \in H^1_0(\rho)$, we have that
    \begin{align}
    \left|\int_{\R^d} f \,\d\rho\right|&=|\langle f,1\rangle_{L_2(\rho)}|\leq |\langle f-f_n,1\rangle_{L_2(\rho)}|+|\langle f_n,1\rangle_{L_2(\rho)}|\\
    &\leq \|f_n-f\|_{H^1(\rho)} + 0 \to 0.
    \end{align}
\end{proof}

\begin{definition}
Consider a weakly continuous curve $[0,1]\to\Prob^{\rm ac}(\R^d), ~t\mapsto\rho_t$, and the associated measure $\boldsymbol{\rho} \in \Prob^{\rm ac}([0,1]\times \R^d)$ given by Definition \ref{def:disintegration}. 
Define
\begin{align}
  H_x^1(\boldsymbol{\rho}) \coloneqq
\Big\{f\in L_2([0,1]\times \R^d, \boldsymbol{\rho}): \text{ the weak derivative } \partial_{x_i}f\text{ exists and } \\
\partial_{x_i}f\in L_2(\boldsymbol{\rho}), ~\forall i\in\{2,\ldots,d+1\} \Big\}  
\end{align}
with respect to the inner product $\langle f,g\rangle_{H_x^1(\boldsymbol{\rho})}:=\langle f,g\rangle_{L_2(\boldsymbol{\rho})}+\langle \nabla_x f,\nabla_x g\rangle_{L_2(\boldsymbol{\rho})}$, where $f=f(t,x)$, $g=g(t,x)$.
Now, consider the subspace
\begin{align}
{\mathcal{X}_0^1}:= \Big\{s \in H_x^1(\boldsymbol{\rho}): s(t,\cdot) \in H_0^1(\rho_t) \ \text{ for almost every} \ t \in [0,1] \Big\}.
\end{align}
For $s\in {\mathcal{X}_0^1}$ we will also write $s_t$ for denoting the function $s(t,\cdot)$.
\end{definition}

\begin{lemma}
    Assume that $\frac{1}{\boldsymbol{\rho}}\in L_{1, {\rm loc}}([0,1]\times\R^d)$. Then, the space $H^1_x(\boldsymbol{\rho})$ is a Hilbert space, and ${\mathcal{X}_0^1}$ is a closed subspace of $H_x^1(\boldsymbol{\rho})$.  
\end{lemma}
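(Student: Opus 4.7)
The plan is to adapt the Hilbert-space completeness argument already cited for $H^1(\rho)$ (via [Kufner 1984, Theorem 1.11]) to the space--time setting $(0,1)\times\R^d$ with the joint measure $\boldsymbol{\rho}$, and then to deduce closedness of $\mathcal{X}_0^1$ by extracting an almost-everywhere-in-$t$ convergent subsequence and applying Cauchy--Schwarz against the constant function $1$.

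For the first claim, the assumption $1/\boldsymbol{\rho}\in L_{1,{\rm loc}}([0,1]\times\R^d)$, combined with Cauchy--Schwarz using the weights $\boldsymbol{\rho}^{\pm 1/2}$, yields the continuous embedding $L_2(\boldsymbol{\rho})\hookrightarrow L_{1,{\rm loc}}((0,1)\times\R^d)$. Hence every element of $L_2(\boldsymbol{\rho})$ defines a distribution on $(0,1)\times\R^d$, so spatial weak derivatives $\partial_{x_i}f$ are well defined and unique. Given a Cauchy sequence $(f_n)$ in $H_x^1(\boldsymbol{\rho})$, I would extract $L_2(\boldsymbol{\rho})$-limits $f$ and $g_i$ of $f_n$ and $\partial_{x_i}f_n$ respectively; the embedding upgrades this $L_2(\boldsymbol{\rho})$ convergence to distributional convergence, forcing $g_i=\partial_{x_i}f$. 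This gives $f\in H_x^1(\boldsymbol{\rho})$ and $f_n\to f$ in the graph norm, while bilinearity, symmetry, and positive definiteness of the inner product are immediate.

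For the second claim, let $s^n\in\mathcal{X}_0^1$ with $s^n\to s$ in $H_x^1(\boldsymbol{\rho})$; by the first step, $s\in H_x^1(\boldsymbol{\rho})$. A slicing argument---testing the product-space distributional identity against tensor-product test functions $\psi(t)\varphi(x)$ and applying Fubini---yields that for a.e. $t\in[0,1]$ the spatial weak derivative of $s(t,\cdot)$ exists and coincides with $\nabla_x s(t,\cdot)\in L_2(\rho_t)$, so that $s_t\in H^1(\rho_t)$ for a.e. $t$. Next, $L_2(\boldsymbol{\rho})$-convergence reads $\int_0^1\|s^n_t-s_t\|_{L_2(\rho_t)}^2\,\d t\to 0$, so along some subsequence $\|s^{n_k}_t-s_t\|_{L_2(\rho_t)}\to 0$ for a.e. $t$. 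For any such $t$, Cauchy--Schwarz against $1\in L_2(\rho_t)$ gives $\bigl|\int s_t\,\d\rho_t\bigr| = \bigl|\int(s_t - s^{n_k}_t)\,\d\rho_t\bigr| \le \|s_t-s^{n_k}_t\|_{L_2(\rho_t)}\to 0$, so $s_t\in H_0^1(\rho_t)$ for a.e. $t$, i.e., $s\in\mathcal{X}_0^1$.

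The principal technical delicacy I anticipate is the slicing step: transferring the product-space weak derivative into pointwise-in-$t$ weak derivatives of the slices $s_t$. This will be handled by testing against a countable dense family of tensor-product functions in $C_c^\infty((0,1)\times\R^d)$ and extracting a common null set outside of which the required identities hold simultaneously for every element of the family; the local integrability of $1/\boldsymbol{\rho}$ is exactly what makes all such pairings absolutely convergent, so that Fubini applies. The remaining ingredients---completeness via $L_2(\boldsymbol{\rho})$-limits and preservation of the zero-mean constraint under limits---are essentially routine.
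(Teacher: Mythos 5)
Your proposal is correct and follows essentially the same route as the paper: the completeness of $H^1_x(\boldsymbol{\rho})$ via the embedding $L_2(\boldsymbol{\rho})\hookrightarrow L_{1,\mathrm{loc}}$ induced by $1/\boldsymbol{\rho}\in L_{1,\mathrm{loc}}$ is exactly the content of the result the paper cites from Kufner, and for closedness the paper likewise extracts a subsequence with $s_n(t,\cdot)\to s(t,\cdot)$ in $H^1(\rho_t)$ for a.e.\ $t$ and invokes the closedness of $H^1_0(\rho_t)$ (itself proved by the same Cauchy--Schwarz pairing with $1$ that you use). Your explicit tensor-product slicing step only spells out an identification the paper leaves implicit; no substantive difference.
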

\begin{proof}
    The Hilbert space property of $H^1_x(\boldsymbol{\rho})$ again follows by \cite[Theorem 1.11]{kufner1984define} and we only prove the closedness of ${\mathcal{X}_0^1}$.
    Now, let $(s_n) \subset {\mathcal{X}_0^1}$ be a sequence with $s_n\to s$ in $H^1_x(\boldsymbol{\rho})$. Then,
    \begin{align*}
        \int_0^1\int_{\R^d} |s_n(t,\cdot) -s(t,\cdot) |^2 + \|\nabla_x s_n(t,\cdot) -\nabla_xs(t,\cdot) \|^2 \, \d \rho_t \d t \to 0 .
    \end{align*}
    This implies pointwise convergence for a.e. $t\in [0,1]$ (along a subsequence), i.e., 
    \begin{align*}
        \int_{\R^d} |s_n(t,\cdot) -s(t,\cdot) |^2+\|\nabla_x s_n(t,\cdot) -\nabla_xs(t,\cdot) \|^2 \, \d \rho_t \to 0 \quad \text{for a.e. } t \in [0,1].
    \end{align*}
    Thus, we have that $s_n(t,\cdot)\to s(t,\cdot)$ converges in $H^1(\rho_t)$ for almost all $t\in[0,1]$. Since $H_0^1(\rho_t)$ is closed, we have that the limit $s(t,\cdot)$ is in $H_0^1(\rho_t)$ for a.e. $t$. 
\end{proof}

\begin{definition} On ${\mathcal{X}_0^1}$ we define an bilinear form by setting
\[
\langle f,g\rangle_{{\mathcal{X}_0^1}}= \langle \nabla_x f,\nabla_x g\rangle_{L_2(\boldsymbol{\rho})}.
\]
\end{definition}

\begin{definition} \label{poincare}
We say that a probability measure $\boldsymbol{\rho}$ fulfills the partial Poincare inequality \eqref{eq:ppi} if there exists $K>0$ such that
\begin{align}\label{eq:ppi}\tag{PPI}
\|g\|^2_{L_2(\boldsymbol{\rho})}\leq K\|\nabla_x g\|^2_{L_2(\boldsymbol{\rho})}
\end{align}
for all $g\in{\mathcal{X}_0^1}$. Note that usually it is formulated as $Var_{{\mu}}[g]\leq K\|\nabla_x g\|^2_{L_2({\mu})}$, but since we have $\E_{{\rho_t}}[g_t]=0$ for $g\in{\mathcal{X}_0^1}$ both notions coincide.
\end{definition}

\begin{lemma} \label{hilbert}
Assume that $\boldsymbol{\rho}$ fulfills the \eqref{eq:ppi}, and that $\frac{1}{\boldsymbol{\rho}}\in L_{1, {\rm loc}}([0,1]\times\R^d)$. Then, $({\mathcal{X}_0^1},\langle\cdot,\cdot\rangle_{{\mathcal{X}_0^1}})$ is a Hilbert space. 
\end{lemma}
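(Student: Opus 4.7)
The plan is to verify that $\langle\cdot,\cdot\rangle_{{\mathcal{X}_0^1}}$ is in fact an inner product (not merely a semi-inner product) and that ${\mathcal{X}_0^1}$ is complete under the induced norm. Bilinearity and symmetry are inherited directly from the $L_2(\boldsymbol{\rho})$ inner product, so the only nontrivial algebraic point is positive definiteness: if $\langle f,f\rangle_{{\mathcal{X}_0^1}}=\|\nabla_x f\|_{L_2(\boldsymbol{\rho})}^2=0$ for some $f\in{\mathcal{X}_0^1}$, the \eqref{eq:ppi} immediately yields $\|f\|_{L_2(\boldsymbol{\rho})}^2\leq K\|\nabla_x f\|_{L_2(\boldsymbol{\rho})}^2=0$, so $f=0$ in $L_2(\boldsymbol{\rho})$, which is the zero element of ${\mathcal{X}_0^1}$.

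For completeness, the key observation is that on ${\mathcal{X}_0^1}$ the norm $\|\cdot\|_{{\mathcal{X}_0^1}}$ is equivalent to the ambient norm $\|\cdot\|_{H^1_x(\boldsymbol{\rho})}$. Indeed, one bound is trivial,
\begin{align}
\|f\|_{{\mathcal{X}_0^1}}^2=\|\nabla_x f\|_{L_2(\boldsymbol{\rho})}^2\leq \|f\|_{L_2(\boldsymbol{\rho})}^2+\|\nabla_x f\|_{L_2(\boldsymbol{\rho})}^2=\|f\|_{H^1_x(\boldsymbol{\rho})}^2,
\end{align}
while \eqref{eq:ppi} gives the reverse comparison
\begin{align}
\|f\|_{H^1_x(\boldsymbol{\rho})}^2\leq (1+K)\,\|\nabla_x f\|_{L_2(\boldsymbol{\rho})}^2=(1+K)\,\|f\|_{{\mathcal{X}_0^1}}^2
\end{align}
for every $f\in{\mathcal{X}_0^1}$. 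Hence the two norms define the same topology on ${\mathcal{X}_0^1}$.

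Now, the previous lemma established that $H^1_x(\boldsymbol{\rho})$ is a Hilbert space (using the hypothesis $1/\boldsymbol{\rho}\in L_{1,\mathrm{loc}}$) and that ${\mathcal{X}_0^1}$ is a closed subspace. A closed subspace of a complete normed space is complete, so ${\mathcal{X}_0^1}$ is complete with respect to $\|\cdot\|_{H^1_x(\boldsymbol{\rho})}$, and by the equivalence of norms just established, also with respect to $\|\cdot\|_{{\mathcal{X}_0^1}}$. Combined with the positive definiteness argument above, this shows that $({\mathcal{X}_0^1},\langle\cdot,\cdot\rangle_{{\mathcal{X}_0^1}})$ is a Hilbert space.

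No genuine obstacle is expected; the entire content of the statement is to recognise that the \eqref{eq:ppi} is precisely what is needed to promote the Dirichlet-type seminorm to a norm equivalent to the full $H^1_x(\boldsymbol{\rho})$-norm, after which completeness is inherited from the already-constructed ambient Hilbert space.
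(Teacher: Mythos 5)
Your proposal is correct and follows exactly the paper's argument: positive definiteness is obtained from the partial Poincar\'e inequality, and completeness follows from the equivalence of $\|\cdot\|_{{\mathcal{X}_0^1}}$ with $\|\cdot\|_{H^1_x(\boldsymbol{\rho})}$ on ${\mathcal{X}_0^1}$ together with the closedness of ${\mathcal{X}_0^1}$ in the ambient Hilbert space $H^1_x(\boldsymbol{\rho})$ established in the preceding lemma. Your version merely spells out the norm equivalence more explicitly than the paper does.
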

\begin{proof}
We have to show that $\langle\cdot,\cdot\rangle_{{\mathcal{X}_0^1}}$ is positive definite and ${\mathcal{X}_0^1}$ is complete. The first claim follows directly from the \eqref{eq:ppi}. The second follows from the fact that $\|\cdot\|_{H^1_x(\boldsymbol{\rho})}\leq \sqrt{K+1}\|\cdot\|_{{\mathcal{X}_0^1}}\leq \sqrt{K+1}\|\cdot\|_{H^1_x(\boldsymbol{\rho})}$ and ${\mathcal{X}_0^1}$ is closed in $H^1_x(\boldsymbol{\rho})$.
\end{proof}
This assumption \eqref{eq:ppi} is usually fulfilled in our setting as we discuss in the following.

\begin{lemma}\cite[Corollary 1.6]{bakry2008simple}\label{prop:poincare} Let $\mu=e^{-V(x)}\,\d x$ be a probability measure on $\R^d$ and assume that $V\in C^2(\R^d)$ such that $V$ is lower bounded. Assume furthermore that there exists $\eta>0, \tilde{R}\geq0$ such that for all $\|x\|>\tilde{R}$ we have
\[
\langle x, \nabla V(x)\rangle \geq \eta \|x\|.
\]
Then, there exists a constant $C = C\big(d, \eta, \|V\|_{L^\infty(B(R))}, \|\nabla V\|_{L^\infty(B(R))} \big) > 0$, where $R := \max\{\frac{2d}{\eta}, \tilde{R}\}$, such that for all $\phi\in H^1_0(\mu)$, it holds
\begin{align}\label{eq:L2-ppi}
    \int_{\R^d}\phi^2(x) \, \d\mu\leq C \int_{\R^d}\|\nabla \phi\|^2 \, \d\mu.
\end{align}
The constant $C$ can be estimated as follows: Let $\beta := \beta_{V,R} :=2\|V\|_{L^\infty(B(R))}$. There exist $C_{1,\eta,R}, C_{2,\eta,R}, C_{3, \eta, R} > 0$ (not depending on $V$) and a universal constant $D > 0$ such that
\begin{align}
    C \le 
    \frac{4d}{\eta^2} \Big(1+\left(C_{1,\eta,R}+C_{2,\eta,R}\|\nabla V\|_{L^\infty(B(R))} + C_{3, \eta, R}\right)D \, R^2 e^\beta\Big).
\end{align}
\end{lemma}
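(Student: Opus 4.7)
The plan is to prove this via the Lyapunov function method of Bakry--Barthe--Cattiaux--Guillin. Associated with $\mu=e^{-V} \d x$ is the generator $L = \Delta - \nabla V \cdot \nabla$, which is self-adjoint on $L_2(\mu)$. The strategy is to construct an explicit Lyapunov function $W \geq 1$ for $L$ that decays outside a large ball $B(R)$, combine with a local Poincaré inequality inside $B(R)$, and then invoke the standard Lyapunov-to-Poincaré reduction.

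First I would take $W(x) := e^{\alpha \|x\|}$ (suitably smoothed near the origin) with $\alpha := \eta/2$. A direct computation gives
\begin{align}
\frac{LW(x)}{W(x)} \;=\; \alpha^2 + \frac{\alpha(d-1)}{\|x\|} - \frac{\alpha\,\langle x,\nabla V(x)\rangle}{\|x\|}.
\end{align}
Using the drift assumption $\langle x,\nabla V\rangle \geq \eta\|x\|$ valid for $\|x\| > \tilde R$, and the choice $R := \max\{2d/\eta,\tilde R\}$ so that $(d-1)/\|x\| \leq \eta/2$ on $B(R)^c$, the above is bounded by $-\eta^2/4$ outside $B(R)$. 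Inside $B(R)$ the quantity is controlled by $\|\nabla V\|_{L^\infty(B(R))}$, so one obtains a drift inequality
\begin{align}
LW \;\leq\; -\tfrac{\eta^2}{4}\,W \;+\; b_{\eta,R,\|\nabla V\|_{L^\infty(B(R))}}\,\mathbf 1_{B(R)}\,W.
\end{align}

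Second, I would establish a local Poincaré inequality on $B(R)$ for $\mu$ restricted to $B(R)$. Since $V$ is bounded on $B(R)$ by $\|V\|_{L^\infty(B(R))}$, the density of $\mu|_{B(R)}$ differs from the uniform density on $B(R)$ by at most a multiplicative factor $e^{\beta}$ with $\beta := 2\|V\|_{L^\infty(B(R))}$. Combining this with the classical Euclidean Poincaré inequality on a ball (whose constant is of order $D R^2$ for a universal $D$) yields
\begin{align}
\mathrm{Var}_{\mu|_{B(R)}}(\phi) \;\leq\; \kappa_{\rm loc}\int_{B(R)}\|\nabla\phi\|^2 \,\d\mu, \qquad \kappa_{\rm loc} \;\leq\; D\,R^2\,e^{\beta}.
\end{align}

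Third, I would apply the Lyapunov-to-Poincaré combination principle: given the drift $LW \leq -cW + b\mathbf 1_{B(R)}W$ and the local Poincaré on $B(R)$, for every $\phi \in H_0^1(\mu)$ one writes
\begin{align}
\int\phi^2\,\d\mu \;\leq\; \tfrac{1}{c}\int\phi^2\,\tfrac{-LW}{W}\,\d\mu + \tfrac{b}{c}\int_{B(R)}\phi^2\,\d\mu,
\end{align}
uses integration by parts to bound $\int \phi^2(-LW)/W\,\d\mu$ by $\int\|\nabla\phi\|^2\,\d\mu$ (this is the key algebraic step from \cite{bakry2008simple}), and then controls the second term via the local Poincaré inequality. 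The resulting global Poincaré constant is essentially $C \leq (1+b\,\kappa_{\rm loc})/c$. Substituting $c = \eta^2/4$, $\kappa_{\rm loc} \leq DR^2 e^\beta$, and the explicit form of $b$ produces the stated upper bound on $C$, with the prefactors $C_{1,\eta,R}$, $C_{2,\eta,R}$, $C_{3,\eta,R}$ tracking the contributions of the drift term, the gradient bound on $\nabla V$, and the lower-order corrections in $LW/W$.

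The main obstacle is purely bookkeeping: the algebraic manipulations (drift computation, integration by parts, and local/global decomposition) are standard, but extracting the precise constants in the advertised form, especially the clean separation into the three $R$-dependent constants and the exponential factor $e^\beta$, requires careful tracking through each of the three steps and a judicious choice of how to split $LW/W$ on $B(R)$ versus its complement.
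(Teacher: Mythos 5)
Your overall strategy coincides with the paper's: the Lyapunov function $W=e^{(\eta/2)\|x\|}$ (smoothed near the origin), a drift estimate outside $B(R)$, a local Poincar\'e inequality on $B(R)$ with constant $\kappa\le D\,R^2e^{\beta}$ obtained by perturbing the Euclidean Poincar\'e inequality, and the combination principle $C\le\frac{1}{\theta}(1+b\kappa)$ from \cite{bakry2008simple}. Two points, one of which is a genuine error in your drift computation.

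First, the bound $(d-1)/\|x\|\le \eta/2$ on $B(R)^c$ that you invoke does \emph{not} give a negative drift rate: with $\alpha=\eta/2$ it yields
\begin{equation}
\frac{LW}{W}\;\le\;\alpha^2+\alpha\cdot\frac{\eta}{2}-\alpha\eta\;=\;\frac{\eta^2}{4}+\frac{\eta^2}{4}-\frac{\eta^2}{2}\;=\;0,
\end{equation}
so the Lyapunov condition $LW\le-cW+b\mathbf 1_{B(R)}$ fails with any $c>0$, and the whole reduction collapses. The point of choosing $R\ge 2d/\eta$ is that it gives the strictly better estimate $(d-1)/\|x\|\le (d-1)\eta/(2d)=\eta/2-\eta/(2d)$, which carried through produces $LW\le-\frac{\eta^2}{4d}W$ outside $B(R)$, i.e.\ $\theta=\eta^2/(4d)$. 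This is exactly why the advertised constant has prefactor $\frac{4d}{\eta^2}$ and not the $\frac{4}{\eta^2}$ that your claimed rate $c=\eta^2/4$ would give; your constants do not reproduce the statement.

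Second, and more minor: the drift computation and the integration by parts in the combination principle are carried out for smooth test functions, so one still has to extend \eqref{eq:L2-ppi} from $C^\infty\cap H^1(\mu)$ with zero mean to all of $H_0^1(\mu)$ by approximation. This is where the hypothesis that $V$ is lower bounded is actually used (it makes $e^{-V}$ bounded above, hence $C_c^\infty(\R^d)$ dense in $H^1(\mu)$, after which one recenters the approximants to have zero mean). Your proposal never invokes the lower bound on $V$, which signals that this approximation step is missing.
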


\begin{proof}
We first show the claim for all smooth $\phi\in C^\infty(\R^d) \cap H^1(\mu)$ with zero mean.
Unfortunately in \cite[Corollary 1.6]{bakry2008simple}, the constants are not explicitly computed which is why we track them here. They consider for $\gamma=\eta\slash 2$ the function given by $W(x)=W_{\eta,R}(x):=e^{\gamma\|x\|}$ for $\|x\|>R$, and smooth continuation for $\|x\| \le R$. They show that, \textit{if} for $L:=\Delta - \langle \nabla V,\nabla \cdot\rangle$, it holds that
\[
LW\leq -\theta W(x) + b\chi_{B(R)},
\]
\textit{then}, the inequality $Var_{\mu}[\phi]\leq C\int_{\R^d}\|\nabla\phi^2\| \, \d\mu$ is true with constant $C=\frac{1}{\theta}(1+b \kappa_R)$, where the constant $\kappa_R$ can be estimated by $D \, R^2 e^\beta$. We are left to upper bound the constants $\theta$ and $b$. Let $\|x\|\leq R$. Then, we have that
\begin{equation}
|LW(x)|\leq |\Delta W(x)| + |\langle \nabla V(x),\nabla W(x)| \leq \|\Delta W\|_{L^\infty(B(R))} + \|\nabla V\|_{L^\infty(B(R))}\|\nabla W\|_{L^\infty(B(R))},
\end{equation}
and for $\|x\|>R$, by choice of $R$,
\begin{align}
LW(x) &= \frac{\eta}{2}\left(\frac{d-1}{\|x\|}+\frac{\eta}{2} - \frac{\langle x,\nabla V\rangle}{\|x\|}\right)W(x)\leq \frac{\eta}{2}\left(\frac{d-1}{R}+\frac{\eta}{2} - \eta\right)W(x)\\
&\leq  \frac{\eta}{2}\left(\frac{\eta(d-1)}{2d} -\frac{\eta d}{2d} \right)W(x)
= -\frac{\eta^2}{4d} W(x),
 \end{align}
and thus the claim is true for $\phi\in C^\infty(\R^d) \cap H^1(\mu)$ with zero mean, using the constants $\theta=\frac{\eta^2}{4d} > 0$ and $b=C_{1,\eta,R}+C_{2,\eta,R}\|\nabla V\|_{L^\infty(B(R))} + C_{3, \eta, R}$, where $C_{1,\eta,R}:=\|\Delta W_{\eta,R}\|_{L^\infty(B(R))}$, $C_{2, \eta, R}:=\|\nabla W_{\eta,R} \|_{L^\infty(B(R))}$ and 
$C_{3, \eta, R} := \theta \|W_{\eta,R}\|_{L^\infty(B(R))}$.\\
In order to extend this result to $\phi\in H^1_0(\mu)$, note that $C_c^\infty(\R^d)$ is dense in $H^1(\R^d)$, see \cite[Cor. 3.23]{AF2003}. Since by assumption, $V$ is lower bounded, i.e., $e^{-V}$ is upper bounded, $C_c^\infty(\R^d)$ is also dense in $H^1(\mu)$ by Lebesgue's dominated convergence theorem. Now take a sequence $(\phi_n) \in C_c^\infty(\R^d)$ converging to $\phi$ in $H^1(\mu)$. Then, also $\tilde {\phi_n} := \phi_n - \int_{\R^d} \phi_n \, \d \mu$ converges to $\phi$ in $H^1(\mu)$, and $\tilde {\phi_n} \in C^\infty(\R^d) \cap H^1(\mu)$ has zero mean. The first part now yields the claim.
\end{proof}
For the specific path $\rho_t$ given by the \emph{linear} potential interpolation $V_t=(1-t)f_0+tf_1$, the disintegration measure $\boldsymbol{\rho}$ fulfills the partial Poincare inequality \eqref{eq:ppi} given suitable assumptions on the end points $f_0, f_1$, as the next proposition shows.

\begin{proposition}\label{prop:linear-ppi}
Let $f_0, f_1\in C^2(\R^d)$ be lower bounded such that there exists $\alpha,\eta >0$ and $\tilde{R}> 0$ such that for all $\|x\|\geq \tilde{R}$ it holds the \emph{drift condition}
\begin{equation}\label{eq:drift-cond}
   \langle x,\nabla f_i(x)\rangle \geq \eta\|x\|, \quad i=0,1,
\end{equation}
and the \emph{linear growth condition}
\begin{equation}\label{eq:linear-growth}
  f_i(x) \geq \alpha \|x\|, \quad i=0,1.
\end{equation}
Let $V_t=(1-t)f_0+tf_1$ and $\rho_t=Z_t^{-1} e^{-V_t}$ for $Z_t=\int_{\R^d}e^{-V_t} \,\d x$, and $\boldsymbol{\rho}$ as in Definition \ref{def:disintegration}. Then, $\boldsymbol{\rho}$ satisfies the partial Poincare inequality \eqref{eq:ppi}, i.e.,
there exists $C>0$ such that for all $\phi\in {\mathcal{X}_0^1}$ it holds
\begin{align}
\int_{[0,1]\times\R^d}\phi^2 \, \d\boldsymbol{\rho}\leq C\int_{[0,1]\times\R^d}\|\nabla_x\phi\|^2 \, \d\boldsymbol{\rho}.
\end{align}
\end{proposition}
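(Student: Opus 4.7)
The plan is to apply the fixed-time Poincar\'e inequality of Lemma \ref{prop:poincare} separately at each $\rho_t$, show that the resulting Poincar\'e constants $C_t$ are uniformly bounded in $t \in [0,1]$, and then integrate the pointwise inequalities against $\d t$ using the disintegration of $\boldsymbol{\rho}$.

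First, I would verify that for every $t \in [0,1]$, the potential $V_t = (1-t)f_0 + tf_1$ satisfies the hypotheses of Lemma \ref{prop:poincare} with the same constants $\eta$ and $\tilde R$: $V_t \in C^2(\R^d)$ as a convex combination, $V_t$ is lower bounded by $\min(\inf f_0, \inf f_1)$, and for $\|x\| \geq \tilde R$ the drift bound $\langle x, \nabla V_t(x)\rangle \geq \eta\|x\|$ follows from \eqref{eq:drift-cond} by linearity. With $R := \max\{2d/\eta, \tilde R\}$ (independent of $t$), Lemma \ref{prop:poincare} produces a constant $C_t > 0$ such that $\int_{\R^d} \phi^2 \, \d \rho_t \leq C_t \int_{\R^d} \|\nabla \phi\|^2 \, \d \rho_t$ for every $\phi \in H^1_0(\rho_t)$.

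Next, I would extract a uniform bound on the $C_t$. The explicit estimate in Lemma \ref{prop:poincare} depends on $V_t$ only through $\|V_t\|_{L^\infty(B(R))}$ and $\|\nabla V_t\|_{L^\infty(B(R))}$. Setting
\[
M_1 := \max\{\|f_0\|_{L^\infty(B(R))},\|f_1\|_{L^\infty(B(R))}\}, \quad M_2 := \max\{\|\nabla f_0\|_{L^\infty(B(R))},\|\nabla f_1\|_{L^\infty(B(R))}\},
\]
the triangle inequality for convex combinations gives $\|V_t\|_{L^\infty(B(R))} \leq M_1$ and $\|\nabla V_t\|_{L^\infty(B(R))} \leq M_2$ for all $t \in [0,1]$. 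Substituting into the bound of Lemma \ref{prop:poincare} delivers a finite $C := \sup_{t \in [0,1]} C_t$ depending only on $d, \eta, R, M_1, M_2$. For $\phi \in {\mathcal{X}_0^1}$, one has $\phi(t,\cdot) \in H^1_0(\rho_t)$ for a.e. $t$ by definition, so the pointwise inequality with constant $C$ holds for a.e. $t$; integrating in $t$ and invoking the disintegration of $\boldsymbol{\rho}$ (Definition \ref{def:disintegration}), with Fubini securing measurability of the inner integrals since $\phi, \nabla_x \phi \in L_2(\boldsymbol{\rho})$, yields \eqref{eq:ppi} with the same constant $C$.

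The linear growth condition \eqref{eq:linear-growth} enters only indirectly, namely via dominated convergence with majorant $e^{-\alpha\|x\|}$ to ensure that $Z_t$ is finite and continuous in $t$ and that $t \mapsto \rho_t$ is weakly continuous, so that the measure $\boldsymbol{\rho}$ of Definition \ref{def:disintegration} is well defined. The main obstacle is the uniform control of $C_t$ in the second step: the explicit constants in Lemma \ref{prop:poincare} depend exponentially on $\|V_t\|_{L^\infty(B(R))}$, so the pointwise-in-$t$ argument alone does not yield the global \eqref{eq:ppi}. The linearity of the interpolation, together with the convexity of $L^\infty$-norms on a fixed compact ball whose radius is determined once and for all by $d, \eta, \tilde R$, is exactly what produces the required uniform bound.
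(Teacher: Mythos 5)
Your proof matches the paper's argument essentially step for step: uniform-in-$t$ verification of the hypotheses of Lemma \ref{prop:poincare} (drift condition preserved under convex combination, fixed $R = \max\{2d/\eta,\tilde R\}$), a $t$-independent Poincar\'e constant extracted from the explicit dependence on $\|V\|_{L^\infty(B(R))}$ and $\|\nabla V\|_{L^\infty(B(R))}$, and integration over $t$ via the disintegration of $\boldsymbol{\rho}$, with the linear growth condition used only to make $Z_t$ finite and $t\mapsto\rho_t$ weakly continuous. The one detail to add is that Lemma \ref{prop:poincare} is stated for the \emph{normalized} potential, i.e. $V(t,x)=V_t(x)+\ln Z_t$, so your uniform bound $M_1$ on $\|V_t\|_{L^\infty(B(R))}$ must also absorb $\sup_{t\in[0,1]}|\ln Z_t|$, which is finite by the continuity and strict positivity of $t\mapsto Z_t$ that you already established.
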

\begin{proof}
First note that by the lower boundedness and linear growth assumption \eqref{eq:linear-growth} on $f_0, f_1$, the normalization constant $Z_t$ is well-defined, and $\rho_t$ is indeed a probability measure for all $t \in [0,1]$. 
Further note that $\boldsymbol{\rho}$ exists if $\rho_t$ is weakly continuous, see again Definition \ref{def:disintegration}. We now show the weak continuity of $\rho_t$: \\
Let $g$ be a bounded and continuous function on $\R^d$. By the linear growth \eqref{eq:linear-growth} of $f_0, f_1$ and dominated convergence, the map $t \mapsto Z_t =\int_{\R^d}e^{-V_t} \,\d x$ is continuous on $[0,1]$. Since it is also strictly positive on $[0,1]$, also its inverse $t \mapsto Z_t^{-1}$ is continuous on $[0,1]$. By the same token, the map $t \mapsto \int_{\R^d}g e^{-V_t} \, \d x$ is continuous, and altogether, $t \mapsto \rho_t$ is weakly continuous.
\\
In order to prove the inequality, we conclude by \eqref{eq:drift-cond} that $\langle x,\nabla V_t(x)\rangle\geq (1-t)\eta\|x\|+t\eta\|x\|\geq \eta\|x\|$ for all $\|x\|\geq \tilde{R}$. Let $R := \max\{\frac{2d}{\eta}, \tilde{R}\}$ and note that for $V(t,x):=V_t(x) + \ln(Z_t)$, it is now easy to check that $V(t,\cdot)\in C^2(\R^d)$, $\|\nabla_x V\|_{L^{\infty}([0,1]\times B(R))}<\infty$ and $\beta:=2\|V\|_{L^\infty([0,1]\times B(R))}<\infty$. 
Then, by Proposition \ref{prop:poincare} we know that
    \[
    \int_{\R^d} \phi(t,\cdot)^2 \,\d\rho_t\leq C \int_{\R^d} \|\nabla_x\phi(t,\cdot)\|^2 \,\d\rho_t
    \]
    for all $\phi\in {\mathcal{X}_0^1}$ and all $t\in[0,1]$, with a constant $C$ which does \emph{not} depend on $t$, hence the claim.
\end{proof}

\begin{remark}\label{rem:laplacian-ppi}
    (i) Gaussian distributions given by $f_i := \frac{1}{2\sigma^2}\|x-m\|^2$ satisfy the assumptions of Proposition \ref{prop:linear-ppi}.\\
    (ii)
    In Proposition \ref{prop:linear-ppi}, the regularity assumptions on the end points, i.e., $f_0, f_1\in C^2(\R^d)$ can be relaxed. It is enough to assume that there exist sequences of functions $(f_{0, n}), (f_{1, n}) \subset C^2(\R^d)$, which converge pointwise to $f_0$ and $f_1$, respectively, such that the assumptions of Proposition \ref{prop:linear-ppi} are satisfied uniformly in $n$ and such that $\|\nabla_x V_n\|_{L^{\infty}}, \|V_n\|_{L^\infty}$ are bounded in $n$. \\
    Hereby, also the Fisher-Rao curve given by linear interpolation of $f_0 := \|x\|$ and $f_1 := \min\{\|x\|, \|x-m\| \}$ with $m \in \R^d$, i.e., the interpolation  between two Laplacian (type) distributions, satisfies \eqref{eq:ppi}.
\end{remark}

\subsection{Existence of a solution to the Poisson Equation}
Now, we consider the Poisson equation \eqref{NLL_PDE} in a slightly more abstract manner:
Showing that a given Fisher-Rao flow $\rho_t$ is also a Wasserstein absolutely continuous curve requires solving the following (family of) PDEs
\begin{align}
    \label{PDE}
    - \nabla \cdot \left(\rho_t \nabla s_t\right) = \left(\alpha_t -\overline{\alpha_t}\right)\rho_t, \quad t\in [0,1],
\end{align}
for the vector field $v_t = \nabla s_t$ satisfying a suitable integrability condition. 
Here, $\alpha \in L_2(\boldsymbol{\rho})$, $\alpha_t \coloneqq \alpha(t,\cdot)$ and $\overline{\alpha_t} \coloneqq \E_{\rho_t}[\alpha_t]$ are given.

In order to do that, we look to find a weak solution to \eqref{PDE} in the space ${\mathcal{X}_0^1}$. 

\begin{definition} \label{def_weak}
    An element $\phi \in {\mathcal{X}_0^1}$ is called a weak solution to \eqref{PDE} if it satisfies, 
    \begin{align}
        \label{weak_PDE}
        \int_0^1 \int_{\R^d} \langle \nabla \phi_t, \nabla \psi_t \rangle \, \d \rho_t \d t =
        \int_0^1 \int_{\R^d} \psi_t \left(\alpha_t -\overline{\alpha_t}\right) \,\d \rho_t \d t, \quad \forall \psi \in {\mathcal{X}_0^1}.
    \end{align}
    Note that for $\psi\in C_c^\infty((0,1)\times \R^d)$ it holds for $g(t)\coloneqq \E_{\rho_t}[\psi(t,\cdot)]$ that $\psi-g\in {\mathcal{X}_0^1}$. Since $\nabla_x(g)=0=\int_0^1\int_{\R^d}g(t) (\alpha_t -\overline{\alpha_t})\,\dd\rho_t\dd t$, we have that \eqref{weak_PDE} is also fulfilled for all $\psi\in C_c^{\infty}((0,1)\times \R^d)$.
\end{definition}

We can now prove Theorem \ref{thm:main-poisson} from Section \ref{section:meets}, which provides a solution to the Poisson problem \eqref{PDE} globally in time $t \in [0,1]$.

\begin{theorem} \label{main} Assume that $\boldsymbol{\rho}$ from Definition \ref{def:disintegration} satisfies \eqref{eq:ppi} for some $K >0$, and that $\frac{1}{\boldsymbol{\rho}}\in L_{1, {\rm loc}}([0,1]\times\R^d)$. Furthermore, assume that $\alpha_t -\overline{\alpha_t}\in L_2(\boldsymbol{\rho})$, i.e., $\|\alpha_t -\overline{\alpha_t}\|_{L_2(\boldsymbol{\rho})}^2 \le C < \infty$.
Then, there exists a unique weak solution $s \in {\mathcal{X}_0^1}$ to \eqref{PDE}. Moreover, $s$  satisfies the following inequality
\begin{align}
    \label{bound_wsol}
     \int_0^1 \int_{\R^d} \|\nabla s\|^2 \, \d \rho_t \d t \leq K  \int_0^1 \int_{\R^d} \left(\alpha_t -\overline{\alpha_t}\right)^2 \,\d \rho_t \d t.
\end{align}
\end{theorem}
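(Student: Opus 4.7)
The plan is a direct application of the Riesz representation theorem on the Hilbert space $({\mathcal{X}_0^1}, \langle \cdot, \cdot \rangle_{{\mathcal{X}_0^1}})$, whose Hilbert space structure is already established in Lemma \ref{hilbert} under exactly the hypotheses stated. The bilinear form on the left-hand side of \eqref{weak_PDE} is literally the inner product $\langle s, \psi \rangle_{{\mathcal{X}_0^1}} = \langle \nabla_x s, \nabla_x \psi \rangle_{L_2(\boldsymbol{\rho})}$, so the weak formulation asks precisely for the Riesz representer of the right-hand side linear functional.

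First I would define the linear functional
\[
T:\mathcal{X}_0^1 \to \R, \qquad T(\psi) \coloneqq \int_0^1 \int_{\R^d} \psi_t\, (\alpha_t - \overline{\alpha_t}) \, \d \rho_t \d t,
\]
and verify that it is well-defined and bounded. Linearity is obvious. For boundedness, I would apply Cauchy--Schwarz in $L_2(\boldsymbol{\rho})$ to get $|T(\psi)| \le \|\psi\|_{L_2(\boldsymbol{\rho})} \, \|\alpha - \overline{\alpha}\|_{L_2(\boldsymbol{\rho})}$, and then use the partial Poincaré inequality \eqref{eq:ppi} (valid since $\psi \in {\mathcal{X}_0^1}$) to upgrade this to
\[
|T(\psi)| \le \sqrt{K}\, \|\alpha - \overline{\alpha}\|_{L_2(\boldsymbol{\rho})}\, \|\nabla_x \psi\|_{L_2(\boldsymbol{\rho})} = \sqrt{K}\, \|\alpha - \overline{\alpha}\|_{L_2(\boldsymbol{\rho})}\, \|\psi\|_{\mathcal{X}_0^1}.
\]
Since $\alpha - \overline{\alpha} \in L_2(\boldsymbol{\rho})$ by assumption, $T$ is a bounded linear functional on ${\mathcal{X}_0^1}$.

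Next, by the Riesz representation theorem there exists a unique $s \in {\mathcal{X}_0^1}$ with $\langle s, \psi \rangle_{\mathcal{X}_0^1} = T(\psi)$ for every $\psi \in {\mathcal{X}_0^1}$, which is precisely \eqref{weak_PDE}. This gives existence and uniqueness simultaneously. Finally, to obtain the a-priori bound \eqref{bound_wsol}, I would test \eqref{weak_PDE} against $\psi = s$ itself, yielding
\[
\int_0^1\!\!\int_{\R^d} \|\nabla_x s_t\|^2 \, \d\rho_t \d t = \int_0^1\!\!\int_{\R^d} s_t\,(\alpha_t - \overline{\alpha_t}) \, \d\rho_t \d t \le \|s\|_{L_2(\boldsymbol{\rho})} \|\alpha - \overline{\alpha}\|_{L_2(\boldsymbol{\rho})},
\]
and then apply \eqref{eq:ppi} once more to $s$ to absorb $\|s\|_{L_2(\boldsymbol{\rho})}$ into $\sqrt{K}\,\|\nabla_x s\|_{L_2(\boldsymbol{\rho})}$. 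Dividing by $\|\nabla_x s\|_{L_2(\boldsymbol{\rho})}$ (trivial case aside) and squaring yields exactly \eqref{bound_wsol}.

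There is no real obstacle; the only subtlety worth being explicit about is that the key non-trivial input, namely that $({\mathcal{X}_0^1}, \langle \cdot,\cdot\rangle_{{\mathcal{X}_0^1}})$ is genuinely a Hilbert space (positive definiteness of the inner product and completeness), has already been proved in Lemma \ref{hilbert} using precisely the partial Poincaré inequality and the local integrability of $1/\boldsymbol{\rho}$. With that in hand, the theorem is a textbook Riesz representation argument, and the constant $K$ from \eqref{eq:ppi} is propagated linearly into the Fisher-Rao--Wasserstein estimate.
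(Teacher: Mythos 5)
Your proof is correct and follows essentially the same route as the paper: the paper also defines the right-hand side as a bounded linear functional on ${\mathcal{X}_0^1}$, uses the partial Poincaré inequality for boundedness, and invokes the Riesz representation theorem on the Hilbert space established in Lemma \ref{hilbert}. Your explicit derivation of the bound \eqref{bound_wsol} by testing against $\psi = s$ is a slightly more detailed spelling-out of what the paper leaves implicit (namely that the Riesz representer's norm equals the functional's operator norm), but the argument is the same.
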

\begin{proof}
    Let $A: {\mathcal{X}_0^1} \to \R$ be defined as 
    \begin{align*}
        A\psi:= \int_0^1 \int_{\R^d} \psi_t \left(\alpha_t -\overline{\alpha_t}\right) \,\d \rho_t \d t.
    \end{align*}
    Therefore, we can rewrite our system of PDEs \eqref{PDE} as
    \begin{align}
        \label{RRT}
        \langle s, \psi \rangle_{{\mathcal{X}_0^1}} = A \psi, \quad \forall \psi \in {\mathcal{X}_0^1}.
    \end{align}
    Clearly, $A$ is a linear operator. Since $\boldsymbol{\rho}$ satisfies \eqref{eq:ppi} the following holds:
    \begin{align*}
        \left|A\psi\right|^2 &=  \left|\int_0^1 \int_{\R^d} \psi_t \left(\alpha_t -\overline{\alpha_t}\right) \,\d \rho_t \d t\right|^2 = \left| \int_{[0,1]\times \R^d} \psi_t(x) \left(\alpha_t(x) -\overline{\alpha_t}(x) \right) \,\d \boldsymbol{\rho}(t,x) \right|^2 \\&\leq  \Big(\int \left(\alpha_t -\overline{\alpha_t}\right)^2 \,\d \boldsymbol{\rho} \Big) \cdot \Big( \int|\psi_t|^2 \,\d \boldsymbol{\rho} \Big) \leq C \int_0^1 \int_{\R^d} |\psi_t|^2 \, \d \rho_t \d t 
        \\&\leq C \, K \int_0^1\int_{\R^d} \|\nabla \psi_t\|^2 \, \d \rho_t \d t = C \, K \|\psi\|^2_{{\mathcal{X}_0^1}}.
    \end{align*}
    By Lemma \ref{hilbert}, the space ${\mathcal{X}_0^1}$ is a Hilbert space. Applying the Riesz Representation theorem, there exists a unique element $s \in {\mathcal{X}_0^1}$ satisfying \eqref{RRT}.
\end{proof}
\begin{remark}
Note that in the case of Fisher-Rao flows, $\alpha$ satisfies 
\begin{align*}
        \partial_t \rho_t = \left(\alpha_t -\overline{\alpha_t}\right) \rho_t, \quad t\in [0,1].
    \end{align*}
    Then, the  {\rm Fisher-Rao action} $\mathcal{A}_{\rm FR}(\rho_t)$ of $\rho_t$ is given by $\int_0^1\|\alpha_t -\overline{\alpha_t}\|_{L_2(\rho_t)}^2 \,\d t$.
    Hence, the r.h.s. of \eqref{bound_wsol} can be described by $K \, \mathcal{A}_{\rm FR}(\rho_t)$.

\end{remark}
\begin{lemma}\label{lem:in_tangent}
Let $\boldsymbol{\rho}$ be such that $\rho_t$ is bounded (from above) for a.e. $t \in [0,1]$.
Then, for $s\in {\mathcal{X}_0^1}$ we have that $\nabla s_t\in T_{\rho_t} \mathcal P_2(\R^d)$ for a.e. $t\in[0,1]$.
\end{lemma}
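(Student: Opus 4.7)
My plan is to exhibit, for a.e.\ $t \in [0,1]$, a sequence $(\phi_n) \subset C_c^\infty(\R^d)$ with $\nabla \phi_n \to \nabla s_t$ in $L_2(\R^d, \rho_t, \R^d)$, from which $\nabla s_t \in T_{\rho_t}\mathcal{P}_2(\R^d)$ follows by the very definition of the tangent space. Fix $t$ in the full-measure set where $s_t := s(t,\cdot) \in H_0^1(\rho_t)$ and $\rho_t \le M_t$ for some finite $M_t$.

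The construction combines three classical approximations: truncation, cutoff, and mollification. By the Sobolev chain rule, the truncations $T_n(y) := (y\wedge n)\vee(-n)$ yield $T_n(s_t) \in H^1(\rho_t) \cap L^\infty(\R^d)$ with $\nabla T_n(s_t) = \mathbf{1}_{|s_t|\le n}\nabla s_t$, and $T_n(s_t) \to s_t$ in $H^1(\rho_t)$ as $n\to\infty$ by dominated convergence on $\{|s_t|>n\}\downarrow\emptyset$. For a smooth cutoff $\chi_R \in C_c^\infty(\R^d)$ with $\chi_R \equiv 1$ on $B(R)$, $\mathrm{supp}\,\chi_R \subset B(2R)$, and $|\nabla \chi_R| \le C/R$, the product $u_{n,R} := \chi_R T_n(s_t)$ is bounded with compact support in $B(2R)$, and one checks $u_{n,R} \to T_n(s_t)$ in $H^1(\rho_t)$ as $R \to \infty$ using the annular bound $\|T_n(s_t)\nabla \chi_R\|_{L_2(\rho_t)} \le nC/R$ and DCT. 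Finally, $\phi_{n,R,\varepsilon} := u_{n,R} * \eta_\varepsilon \in C_c^\infty(\R^d)$ for $\varepsilon$ small and $\eta_\varepsilon$ a standard mollifier.

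The upper bound $\rho_t \le M_t$ enters crucially to pass from unweighted to weighted convergence in the mollification step, via the estimate $\|f\|_{L_2(\rho_t)} \le \sqrt{M_t}\|f\|_{L_2(\R^d,dx)}$ valid for every measurable $f$. Since $u_{n,R} \in L^\infty(\R^d)$ has compact support, $u_{n,R} \in L_2(\R^d, dx)$, and so $\phi_{n,R,\varepsilon} \to u_{n,R}$ in $L_2(\R^d, dx)$, hence in $L_2(\rho_t)$. Arguing analogously for the gradients and employing a diagonal extraction in the parameters $(n, R, \varepsilon)$ produces the required sequence $\phi_n \in C_c^\infty(\R^d)$ with $\nabla\phi_n \to \nabla s_t$ in $L_2(\rho_t, \R^d)$.

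The main technical obstacle is the gradient side of the mollification step. The weak Lebesgue derivative $\nabla u_{n,R} = \chi_R \mathbf{1}_{|s_t|\le n}\nabla s_t + T_n(s_t)\nabla \chi_R$ lies in $L_2(\R^d, \rho_t)$ by construction, but the first summand is not automatically in $L_2(\R^d, dx)$ — the upper bound on $\rho_t$ does not preclude $\rho_t$ from vanishing on a set of positive Lebesgue measure, where $\nabla s_t$ may then blow up. Closing this gap requires exploiting the companion assumption $1/\boldsymbol{\rho} \in L_{1, \mathrm{loc}}$ from Theorem~\ref{thm:main-poisson}, together with a careful refinement of the diagonal choice — this interplay between the upper bound on $\rho_t$ and the local integrability of its reciprocal is the real technical heart of the argument.
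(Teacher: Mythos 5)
Your overall strategy is the same as the paper's: reduce the claim to the density of $\{\nabla\phi:\phi\in C_c^\infty(\R^d)\}$-type approximants of $\nabla s_t$ in $L_2(\R^d,\rho_t)$. The paper disposes of this in one line — since $\rho_t$ is bounded above, it invokes the density of $C_c^\infty(\R^d)$ in $H^1(\rho_t)$ (obtained earlier from density of $C_c^\infty(\R^d)$ in the unweighted $H^1(\R^d)$ together with the domination $\rho_t\le M_t$), and then $s_t\in H^1(\rho_t)$ immediately gives $\nabla s_t\in T_{\rho_t}\mathcal P_2(\R^d)$. You instead try to prove that density statement from scratch by truncation, cutoff and mollification. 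Your truncation and cutoff steps are fine.

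However, your proof has a genuine gap, and you say so yourself: the mollification step is not closed. The upper bound $\rho_t\le M_t$ lets you transfer \emph{unweighted} $L_2(\d x)$ convergence of $\nabla(u_{n,R}*\eta_\eps)$ to $L_2(\rho_t)$ convergence, but this requires $\nabla u_{n,R}\in L_2(\R^d,\d x)$, which does not follow from $\nabla s_t\in L_2(\rho_t)$ when $\rho_t$ degenerates. The repair you gesture at — invoking $\frac{1}{\boldsymbol{\rho}}\in L_{1,\mathrm{loc}}$ — does not obviously work: by Cauchy--Schwarz, $g\in L_2(\rho_t)$ together with $\frac{1}{\rho_t}\in L_{1,\mathrm{loc}}$ only yields $g\in L_{1,\mathrm{loc}}(\d x)$, not $L_{2,\mathrm{loc}}(\d x)$ (e.g.\ in $d=1$ take $\rho\sim|x|^{1/2}$ near the origin, which is bounded with locally integrable reciprocal, and $g=|x|^{-1/2}$, which lies in $L_2(\rho)$ but not in $L_{2,\mathrm{loc}}(\d x)$), so standard mollification estimates in $L_2(\d x)$ remain unavailable for the gradient term. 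Since you explicitly defer this step as ``the real technical heart of the argument'' without supplying it, the proposal as written is incomplete: the decisive approximation — precisely the density fact the paper cites — is asserted rather than proved.
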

\begin{proof}
For a.e. $t$, since $\rho_t$ is bounded from above by assumption, $C_c^{\infty}(\R^d)$ is dense in $H^1(\rho_t)$. Now, $s_t \in H^1(\rho_t)$ immediately yields the claim 
$\nabla s_t\in T_{\rho_t} \mathcal P_2(\R^d)=\overline{\{\nabla \phi:\phi\in C_c^{\infty}(\R^d)\}}^{L_2(\rho_t)}$.
\end{proof}

It follows the proof of Theorem \ref{thm:main-abs-cont} from Section \ref{section:meets}, which says that certain Fisher-Rao curves are indeed also Wasserstein absolutely continuous curves.

\begin{theorem}\label{appendix:main-abs-cont}
Let $\rho_0,\rho_1$ be probability densities on $\R^d$ and assume that $\rho_t$ is a Fisher-Rao curve defined by 
\[
\partial_t \rho_t = (\alpha_t -\overline{\alpha_t})\rho_t,
\]
 such that $\boldsymbol{\rho}$ and $\alpha$ satisfy the assumptions of Theorem \ref{main}.

Then, $\rho_t$ is a Wasserstein absolutely continuous curve with vector field $\nabla_x s$, where $s\in{\mathcal{X}_0^1}$ is the unique weak solution of \eqref{weak_PDE}. If $\rho_t$ is bounded for a.e. $t$, we have that $\nabla_x s_t\in T_{\rho_t} \mathcal P_2(\R^d)$ for a.e. $t\in[0,1]$, i.e., for every other vector field $v_t$ satisfying the continuity equation for $\rho_t$, we have that $\|v_t\|_{L_2(\rho_t)}\geq \|\nabla_x s_t\|_{L_2(\rho_t)}$ for a.e. $t\in [0,1]$. 

Furthermore, it holds that
\begin{align}\label{appendix:Wasserstein-FR-link}
     \int_0^1 \int_{\R^d} \|\nabla_x s_t\|^2 \, \d \rho_t \d t \leq K  \int_0^1 \int_{\R^d} \left(\alpha_t -\overline{\alpha_t}\right)^2 \, \d \rho_t \d t.
\end{align}
\end{theorem}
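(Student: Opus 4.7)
The plan is to build directly on Theorem \ref{main}: first extract the unique weak solution $s\in{\mathcal{X}_0^1}$ together with its norm bound, then verify that $(\rho_t,\nabla_x s_t)$ satisfies the continuity equation in the distributional sense, and finally invoke Lemma \ref{lem:in_tangent} to obtain the tangent-space inclusion and the induced minimality of the velocity field.

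First, I apply Theorem \ref{main} under the present hypotheses to produce a unique $s \in {\mathcal{X}_0^1}$ satisfying the weak formulation \eqref{weak_PDE} and the estimate \eqref{bound_wsol}. The latter is exactly \eqref{appendix:Wasserstein-FR-link} and in particular yields $\int_0^1 \|\nabla_x s_t\|_{L_2(\rho_t)}^2 \, \dd t < \infty$, so Cauchy--Schwarz gives the time-integrability $\int_0^1 \|\nabla_x s_t\|_{L_2(\rho_t)} \, \dd t < \infty$ required by the definition of Wasserstein absolute continuity. Borel measurability of $\nabla_x s : [0,1] \times \R^d \to \R^d$ is automatic since $s \in H_x^1(\boldsymbol{\rho})$.

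Second, I verify the distributional continuity equation $\partial_t \rho_t + \nabla \cdot (\rho_t \nabla_x s_t) = 0$. For arbitrary $\phi \in C_c^\infty((0,1) \times \R^d)$, pairing the Fisher-Rao equation $\partial_t \rho_t = (\alpha_t - \overline{\alpha_t}) \rho_t$ with $\phi$ and integrating by parts in $t$ yields
\begin{equation}
\int_0^1 \int_{\R^d} \partial_t \phi \, \dd \rho_t \dd t = -\int_0^1 \int_{\R^d} \phi \, (\alpha_t - \overline{\alpha_t}) \, \dd \rho_t \dd t.
\end{equation}
On the other hand, by the remark after Definition \ref{def_weak}, the weak formulation \eqref{weak_PDE} extends to any such $\phi$ (after subtracting its conditional mean $t \mapsto \E_{\rho_t}[\phi(t,\cdot)]$, which changes neither $\nabla_x \phi$ nor the right-hand side), giving
\begin{equation}
\int_0^1 \int_{\R^d} \langle \nabla_x \phi, \nabla_x s_t \rangle \, \dd \rho_t \dd t = \int_0^1 \int_{\R^d} \phi \, (\alpha_t - \overline{\alpha_t}) \, \dd \rho_t \dd t.
\end{equation}
Adding these two identities yields precisely the distributional continuity equation, which together with the integrability obtained in the first step certifies Wasserstein absolute continuity of $\rho_t$ with velocity field $\nabla_x s_t$.

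Third, if $\rho_t$ is bounded from above for a.e.\ $t$, then Lemma \ref{lem:in_tangent} applies directly and gives $\nabla_x s_t \in T_{\rho_t}\mathcal{P}_2(\R^d)$ for a.e.\ $t$. The inequality $\|v_t\|_{L_2(\rho_t)} \geq \|\nabla_x s_t\|_{L_2(\rho_t)}$ for any competing vector field satisfying the continuity equation then follows from the standard characterization recalled around \eqref{argmin}--\eqref{tan_reg}, which identifies the regular tangent space as the set of minimal-$L_2$-norm solutions of the continuity equation, see \cite[§~8]{AGS2008}. The heavy lifting (partial Poincar\'e, Hilbert structure, Riesz representation) has been done in Theorem \ref{main}; the only subtlety in the present proof is the mean-subtraction step that upgrades the weak formulation \eqref{weak_PDE} from test functions in ${\mathcal{X}_0^1}$ (with vanishing conditional mean) to arbitrary $\phi \in C_c^\infty((0,1)\times\R^d)$, which is the bridge between the stationary weak PDE and the full space-time continuity equation.
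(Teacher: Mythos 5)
Your proof is correct and follows essentially the same route as the paper: invoke Theorem~\ref{main} to obtain the unique $s\in{\mathcal{X}_0^1}$ together with the energy bound \eqref{bound_wsol}, observe that $\nabla_x s$ is Borel with the required time-integrability, verify the distributional continuity equation, and then apply Lemma~\ref{lem:in_tangent} and \cite[Proposition 8.4.5]{AGS2008} for the tangent-space inclusion and minimality. The one place you are more explicit than the paper — which simply asserts ``we can conclude that $\rho_t$ satisfies the continuity equation in the sense of distributions'' — is your careful pairing of the distributional Fisher--Rao identity with the extended weak formulation from the remark after Definition~\ref{def_weak}; that is exactly the intended argument, just written out.
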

Note that by the \textit{Benamou-Brenier formula}, see \cite[Equation 8.0.3]{AGS2008}, the left-hand side of \eqref{appendix:Wasserstein-FR-link} is an upper bound for the Wasserstein distance $W_2^2(\rho_0,\rho_1)$, while the double-integral on the right-hand side defines the Fisher-Rao action $\mathcal{A}_{\rm FR}$ of $\rho_t$.
\begin{proof}
Using $(\alpha_t -\overline{\alpha_t})\rho_t=\partial_t\rho_t$ and Theorem \ref{main}, we can conclude that $\rho_t$ satisfies the continuity equation 
\begin{equation}\label{cont-eq-prop}
  \partial_t\rho_t + \nabla\cdot(\rho_t v_t)= 0  
\end{equation}
with a Borel vector field $v_t \coloneqq \nabla s_t$ in the sense of distributions. 
Note that by construction, $s\in{\mathcal{X}_0^1}$, and hence, the gradient $\nabla_x s$ is already a Borel-measurable vector field on $[0,1]\times \R^d$ such that $\nabla_x s_t\in L_2(\rho_t)$ for a.e. $t\in[0,1]$. 
Further, \eqref{bound_wsol} implies that $t \mapsto \|\nabla s_t\|_{L_2(\rho_t)} \in L^1([0,T])$.

Together with the fact that $\rho_t$ is weakly continuous, we infer by \cite[Theorem 8.3.1]{AGS2008} that $\rho_t$ is Wasserstein absolutely continuous. By Lemma \ref{lem:in_tangent}, it holds that $\nabla s_t\in T_{\rho_t} \mathcal P_2(\R^d)$, if $\rho_t$ is bounded for a.e. $t$. In this case, \cite[Proposition 8.4.5]{AGS2008} immediately implies that $\nabla s_t$ has minimal norm $\|\nabla s_t\|_{L_2(\rho_t)}$ among all Borel vector fields $v_t$ fulfilling \eqref{cont-eq-prop}.
\end{proof}

Let us apply the above result to the curve given by linear interpolation of the energy functions $f_0, f_1$.
\begin{corollary}\label{appendix:apply-to-linear}
Let $f_0, f_1 \in C^2(\R^d)$ be lower bounded by some $\gamma \in \R$ such that the conditions \eqref{eq:drift-cond} and \eqref{eq:linear-growth} of Propostion \ref{prop:linear-ppi} are satisfied for some $\alpha, \eta > 0$. Further, for simplicity, assume that $f_0, f_1$ have at most polynomial growth outside a ball, i.e., there exists $R > 0$ such that 
\begin{equation}\label{eq:polynomial-growth}
    |f_i(x)| \le M \|x\|^k \quad \text{for all } \|x\| > R, ~i=0,1.
\end{equation}
Then, Theorem \ref{appendix:main-abs-cont} can be applied to the Fisher-Rao curve $\rho_t$ given by $V_t=(1-t)f_0+tf_1$ and $\rho_t=Z_t^{-1} e^{-V_t}$ with $Z_t=\int_{\R^d}e^{-V_t} \,\d x$.
\end{corollary}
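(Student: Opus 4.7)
The plan is to verify each of the three hypotheses of Theorem \ref{appendix:main-abs-cont} (equivalently, of Theorem \ref{main}) for the curve $\rho_t = Z_t^{-1}e^{-V_t}$, and then to additionally verify the upper boundedness needed for the tangent-space assertion. Throughout, $\alpha_t = -\partial_t f_t = f_0 - f_1$ because $\partial_t V_t = f_1 - f_0$.

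First, I would invoke Proposition \ref{prop:linear-ppi} directly: the assumptions \eqref{eq:drift-cond} and \eqref{eq:linear-growth} on $f_0,f_1$ together with $f_0,f_1 \in C^2(\R^d)$ are exactly what is needed to conclude that the disintegration measure $\boldsymbol{\rho}$ associated with $\rho_t$ satisfies the partial Poincaré inequality \eqref{eq:ppi}. This simultaneously gives weak continuity of $t \mapsto \rho_t$, so that $\boldsymbol{\rho}$ is well defined.

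Second, I would check that $\tfrac{1}{\boldsymbol{\rho}} \in L_{1,\mathrm{loc}}([0,1]\times\R^d)$. Since $V(t,x) = (1-t)f_0(x) + tf_1(x)$ is jointly continuous on $[0,1]\times\R^d$ and the normalization $Z_t = \int_{\R^d} e^{-V_t}\,\d x$ is continuous in $t$ (by dominated convergence, using \eqref{eq:linear-growth}) and strictly positive, the density $\rho_t(x) = Z_t^{-1} e^{-V_t(x)}$ is continuous and strictly positive on $[0,1]\times\R^d$. Hence $\tfrac{1}{\boldsymbol{\rho}} = Z_t\, e^{V_t}$ is continuous, so bounded, hence integrable on any compact set.

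Third, I would verify that $\alpha_t - \overline{\alpha_t} \in L_2(\boldsymbol{\rho})$. It suffices to show $\alpha_t = f_0 - f_1 \in L_2(\boldsymbol{\rho})$, since then Jensen's inequality applied fiberwise gives $|\overline{\alpha_t}|^2 \le \int \alpha_t^2\,\d\rho_t$ and therefore $\|\alpha - \overline{\alpha}\|_{L_2(\boldsymbol{\rho})} \le 2\|\alpha\|_{L_2(\boldsymbol{\rho})}$. For the $L_2$ estimate of $\alpha$, I would split the integral at $\|x\| = R$: on the ball $\|x\| \le R$, both $f_0,f_1$ are continuous hence bounded, while on the complement, the polynomial growth \eqref{eq:polynomial-growth} gives $(f_0-f_1)^2 \le 4M^2 \|x\|^{2k}$, and the lower bound $V_t \ge \alpha\|x\|$ obtained from averaging \eqref{eq:linear-growth} in $t$, together with $Z_t$ bounded below (continuous and positive on $[0,1]$), yields
\begin{align}
\int_0^1 \int_{\|x\|>R} \|x\|^{2k}\,\d\rho_t\d t \;\le\; \sup_{t\in[0,1]} Z_t^{-1} \int_{\|x\|>R} \|x\|^{2k} e^{-\alpha\|x\|}\,\d x \;<\;\infty.
\end{align}
This is the main technical step, but the integrand decays exponentially so convergence is routine.

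Finally, for the tangent-space clause of Theorem \ref{appendix:main-abs-cont}, I would note that $V_t \ge \gamma$ for all $(t,x)$, so $\rho_t \le Z_t^{-1} e^{-\gamma}$; since $Z_t$ is bounded below on $[0,1]$, $\rho_t$ is uniformly bounded from above. With all hypotheses in place, applying Theorem \ref{appendix:main-abs-cont} concludes the proof.
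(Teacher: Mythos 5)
Your proof is correct and follows essentially the same route as the paper's: invoke Proposition \ref{prop:linear-ppi} for the partial Poincar\'e inequality and weak continuity, argue joint continuity and strict positivity of $(t,x)\mapsto\rho_t(x)$ to get $\tfrac{1}{\boldsymbol\rho}\in L_{1,\mathrm{loc}}$, and verify the $L_2(\boldsymbol\rho)$ condition on $\alpha_t=f_0-f_1$ by splitting the integral at $\|x\|=R$ and using lower boundedness, polynomial growth, and linear growth. Your Jensen-based reduction of $\alpha-\overline\alpha$ to $\alpha$ alone is a slight shortcut over the paper's separate treatment of $\overline{\alpha_t}$, and your explicit check that $\rho_t$ is uniformly bounded above (to activate the tangent-space clause of Theorem \ref{appendix:main-abs-cont}) is a reasonable addition the paper's proof leaves implicit.
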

\begin{proof}
    First, by the drift condition \eqref{eq:drift-cond} and linear growth \eqref{eq:linear-growth}, the measure $\boldsymbol{\rho}$ satisfies \eqref{eq:ppi} by Proposition \ref{prop:linear-ppi}. Furthermore, in Proposition \ref{prop:linear-ppi} we showed that $t \mapsto Z_t^{-1}$ is continuous on $[0,1]$, hence $(t,x) \mapsto \rho_t(x) = Z_t^{-1} e^{-V_t(x)}$ is strictly positive and continuous on $[0,1] \times \R^d$. Therefore, it immediately follows that $\frac{1}{\boldsymbol{\rho}}\in L_{1, {\rm loc}}([0,1]\times\R^d)$. 

    We are left to check the $L_2$-condition on $\alpha_t = -\partial_t V_t= f_0 - f_1$. First, since $t \mapsto Z_t^{-1}$ is continuous on $[0,1]$, hence bounded, it follows together with \eqref{eq:linear-growth} and \eqref{eq:polynomial-growth} that
    \begin{align}
        |\overline{\alpha_t}| &\le Z_t^{-1} \int_{\R^d} |f_0 - f_1| e^{-V_t} \, \d x \\
        &\le Z_t^{-1} \int_{\|x\| \le R} |f_0 - f_1| e^{-V_t} \, \d x ~ + Z_t^{-1} \int_{\|x\| > R} |f_0 - f_1| e^{-V_t} \, \d x\\
        &\le C_1 \int_{\|x\| \le R} |f_0 - f_1| e^{-\gamma} \, \d x ~+ C_2 \, M \int_{\|x\| > R} \|x\|^k e^{-\alpha \|x\|} \, \d x\\
        &\le C < \infty,
    \end{align}
    with a constant $C>0$ not depending on $t$. Hence, $\overline{\alpha_t} \in L_2(\boldsymbol{\rho})$, and we are left to prove ${\alpha_t} \in L_2(\boldsymbol{\rho})$. But this follows by the similar arguments
    \begin{align}
        Z_t^{-1} \int_{\R^d} |f_0 - f_1|^2 e^{-V_t} \, \d x 
        &\le Z_t^{-1} \int_{\|x\| \le R} |f_0 - f_1|^2 e^{-V_t} \, \d x ~ + Z_t^{-1} \int_{\|x\| > R} |f_0 - f_1|^2 e^{-V_t} \, \d x\\
        &\le C_1 \int_{\|x\| \le R} |f_0 - f_1|^2 e^{-\gamma} \, \d x ~+ C_2 \, M^2 \int_{\|x\| > R} \|x\|^{2k} e^{-\alpha \|x\|} \, \d x\\
        &\le C < \infty,
    \end{align}
    again, with a constant $C>0$ not depending $t$. Integrating over $t \in [0,1]$ yields ${\alpha_t} \in L_2(\boldsymbol{\rho})$, and hence, all assumptions of Theorem \ref{appendix:main-abs-cont} are fulfilled.
\end{proof}

\begin{remark}\label{rem:laplacian-abs}
    Hence, the Fisher-Rao curves defined by linear interpolation $V_t=(1-t)f_0+tf_1$ are absolutely continuous in the Wasserstein sense, e.g., in the cases (see also Remark \ref{rem:laplacian-ppi})
    \begin{itemize}
        \item Gaussian distributions given by $f_i := \frac{1}{2\sigma^2}\|x-m\|^2$,
        \item Laplacian distributions given by $f_i := \frac{1}{\sigma}\|x-m\|$, 
        \item Laplacian type distributions given by $f_i := \min\{\|x\|, \|x-m\|\}$.
    \end{itemize}
\end{remark}

\subsection{A teleportation issue with the linear interpolation}\label{section:teleportation}
In this section we want to demonstrate that the norm $\|v_t\|_{L_2(\rho_t)}$ of the (optimal) velocity field \eqref{argmin} belonging to the Fisher-Rao path given by linear interpolation $f_t := (1-t)f_0 + tf_1$, explodes for times $t$ close to $1$, when the measures $\rho_0$ and $\rho_1$ are asymmetrical to each other. 

In the following, we restrict ourselves to the one-dimensional setting $\R^1$, in order to exploit the {isometry} from the Wasserstein space $\mathcal{P}_2(\R^1)$ into the convex cone $\mathcal{C} \subset L_2(0,1)$ of \emph{quantile functions}, also see \cite[Section 3]{DSBHS2024}. For the sake of self-containedness, we give a short introduction of the necessary tools:
For $\mu \in \mathcal{P}_2(\R)$ we define its cumulative distribution function (CDF) by
\begin{equation}
    F_\mu(x) := \mu(-\infty, x], \quad x \in \R,
\end{equation}
and its left-continuous quantile function
\begin{equation}
    Q_\mu(s) := \min\{x \in \R: F(x) \ge s \}, \quad s \in (0,1).
\end{equation}
Then, it holds the isometric property
\begin{equation}\label{eq:isometry-quantile}
  W_2^2(\mu,\nu) = \int_0^1 |Q_\mu(s) - Q_\nu(s)|^2 \, \d s  \quad \text{for all } \mu,\nu \in \mathcal{P}_2(\R).
\end{equation}
Now, by \cite[Theorem 8.3.1]{AGS2008}, the norm $\|v_t\|_{L_2(\rho_t)}$ of the optimal field $v_t$ belonging to an absolutely continuous Wasserstein curve $\rho_t$ can be characterized by the \emph{metric derivative} of the curve via
\begin{equation}
    \|v_t\|_{L_2(\rho_t)} = |\rho'(t)| := \lim_{s \to t} \frac{W_2(\rho_t, \rho_s)}{|t-s|} \quad \text{for a.e. } t \in [0,1].
\end{equation}
Using the aforementioned isometry \eqref{eq:isometry-quantile} and Komura's theorem, it holds for a.e. $t \in [0,1]$
\begin{equation}
    \lim_{s \to t} \frac{W_2(\rho_t, \rho_s)}{|t-s|} = \lim_{s \to t} \frac{\|Q_{\rho_t} - Q_{\rho_s}\|_{L_2(0,1)}}{|t-s|} = \|\partial_t (Q_{\rho_t})\|_{L_2(0,1)}.
\end{equation}
Hence, if we can explicitly compute the quantiles $Q_{\rho_t}$ (and its time derivative), we get an analytic representation of $\|v_t\|_{L_2(\rho_t)}$ via
\begin{equation}\label{eq:analytic-representation}
  \|v_t\|_{L_2(\rho_t)}^2 = \int_0^1 \Big(\partial_t (Q_{\rho_t})(s) \Big)^2 \, \d s.  
\end{equation}
In order to do so, we choose the prior and target measures the following way: Let $f_0(x) := |x|$ and $f_1(x) := 2\min\{|x|, |x-m|\}$, where the second mode $m > 0$ is fixed for the moment. By Corollary \ref{appendix:apply-to-linear} and Remark \ref{rem:laplacian-abs}, the Fisher-Rao curve $\rho_t$ given by linear interpolation of $f_0,f_1$ is Wasserstein absolutely continuous. Now, it is straightforward (but tedious) to explicitly calculate the CDF and quantile function of $\rho_t$: 

First, since $\rho_t \propto \rho_0^{1-t}\rho_1^{t}$ and $2\min\{|x|, |x-m|\} = |x|+|x-m| - \big||x| - |x-m| \big|$, we obtain
\begin{equation}
   \rho_t \propto e^{-|x| - t|x-m| + t | |x| - |x-m| |} \quad \text{for all } x\in \R, ~t\in [0,1],
\end{equation}
up to a normalization constant $Z_t$ which we calculate below. 

\paragraph{Calculating the CDF of $\rho_t$.}
By dissolving the absolute values on suitable subintervals, we obtain for the CDF
\begin{equation}
  Z_t \cdot  F_{\rho_t}(x) ~=~ \frac{1}{1+t} e^{x(1+t)}, \quad x \le 0,
\end{equation}
\begin{equation}
  Z_t \cdot  F_{\rho_t}(x) ~=~ \frac{2}{1+t} - \frac{1}{1+t} e^{-x(1+t)}, \quad 0 \le x \le \frac{m}{2},
\end{equation}
\begin{equation}
  Z_t \cdot  F_{\rho_t}(x) ~=~ \frac{2}{1+t} - \Big(\frac{1}{1+t} + \frac{1}{-1+3t}\Big) e^{-\frac{m}{2}(1+t)} + \frac{1}{-1 +3t} e^{x(-1+3t) - 2tm}, \quad \frac{m}{2} \le x \le m,
\end{equation}
\begin{align}
  Z_t \cdot  F_{\rho_t}(x) ~=~ &\frac{2}{1+t} - \Big(\frac{1}{1+t} + \frac{1}{-1+3t}\Big) e^{-\frac{m}{2}(1+t)}\\
    + &\Big(\frac{1}{1+t} + \frac{1}{-1+3t}\Big) e^{-m(1-t)}
    -\frac{1}{1+t} e^{-x(1+t) + 2tm}, \quad x \ge m,
\end{align}
where the normalization constant is hence given by
\begin{align}
  Z_t = Z_t \cdot  F_{\rho_t}(\infty) ~=~ &\frac{2}{1+t} - \Big(\frac{1}{1+t} + \frac{1}{-1+3t}\Big) e^{-\frac{m}{2}(1+t)}\\
    + &\Big(\frac{1}{1+t} + \frac{1}{-1+3t}\Big) e^{-m(1-t)}.
\end{align}

\paragraph{Calculating the quantile function of $\rho_t$.} Since $F_{\rho_t}(\cdot)$ is continuous and strictly increasing, its inverse is given by $F_{\rho_t}^{-1} = Q_{\rho_t}$. A simple calculation now yields
\begin{equation}
  Q_{\rho_t}(s) ~=~ \frac{1}{1+t} \ln \big( Z_t (1+t) s \big), \quad 0 < s \le s_1,
\end{equation}
\begin{equation}
  Q_{\rho_t}(s) ~=~ -\frac{1}{1+t} \ln \big( -Z_t (1+t) s + 2\big), \quad s_1 \le s \le s_2,
\end{equation}
\begin{equation}
  Q_{\rho_t}(s) ~=~ \frac{\ln \big( (-1+3t) (s Z_t - \frac{2}{1+t} + (\frac{1}{1+t} + \frac{1}{-1+3t})e^{-\frac{m}{2}(1+t)} )  \big) +2tm}{-1+3t} , 
  \quad s_2 \le s \le s_3,
\end{equation}
\begin{equation}
  Q_{\rho_t}(s) ~=~ \frac{-\ln \big( Z_t (1+t) (1-s) \big) +2tm}{1+t} , 
  \quad s_3 \le s <1.
\end{equation}
Here, the subintervals are given by
\begin{equation}
    s_1 = \frac{1}{Z_t (1+t)},
\end{equation}
\begin{equation}
    s_2 = \frac{1}{Z_t} \big( \frac{2}{1+t} - \frac{1}{1+t}e^{-\frac{m}{2}(1+t)} \big),
\end{equation}
\begin{equation}
    s_3 = \frac{1}{Z_t} \big( \frac{2}{1+t} - (\frac{1}{1+t} + \frac{1}{-1+3t})e^{-\frac{m}{2}(1+t)} + \frac{1}{-1+3t}e^{-m(1-t)} \big).
\end{equation}

\paragraph{Calculating the time derivative $\partial_t Q_{\rho_t}$ and the norm $\|v_t\|_{L_2(\rho_t)}^2$.} Using standard differentiation rules, the time derivative of $Q_{\rho_t}$ can be computed separately on each subinterval $(s_i, s_{i+1})$. Since the resulting derivatives become unbearably long, we only include them in the Supplementary Material. Nevertheless, they can be computed analytically as elementary functions, and be inserted into the desired integral \eqref{eq:analytic-representation}. By numerical integration, we can finally approximate the norm $\|v_t\|_{L_2(\rho_t)}^2$ arbitrarily close. The computed results are depicted in Figure \ref{fig:velocity-norm-evolution} for different values of the mode $m>0$; roughly speaking, for late times $t \sim 1$, the norm of the velocity field grows \emph{exponentially} with $m$.

On a side note, the integration of $\partial_t Q_{\rho_t}$ \emph{only} on the subintervals $(s_1, s_2)$ and $(s_2, s_3)$ contributes to the explosion of $\|v_t\|_{L_2(\rho_t)}^2$. Intuitively, this corresponds to the fact that mass gets shifted very abruptly \emph{mainly} in the areas $x \in (0, \frac{m}{2})$ and $(\frac{m}{2}, m)$ for late times $t$.

\begin{figure}[ht]
    \centering
    \begin{minipage}{\textwidth}
        \centering
        \includegraphics[width=1\textwidth]{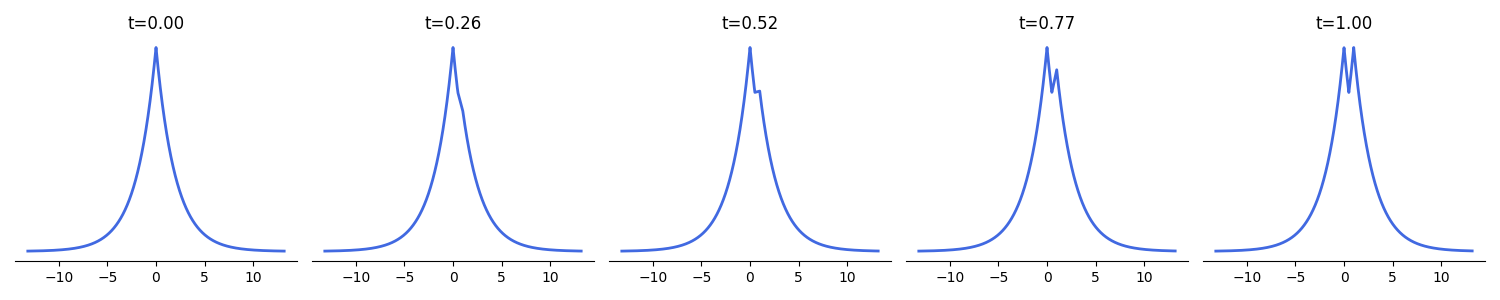}
    \end{minipage}
    \begin{minipage}{\textwidth}
        \centering
        \includegraphics[width=1\textwidth]{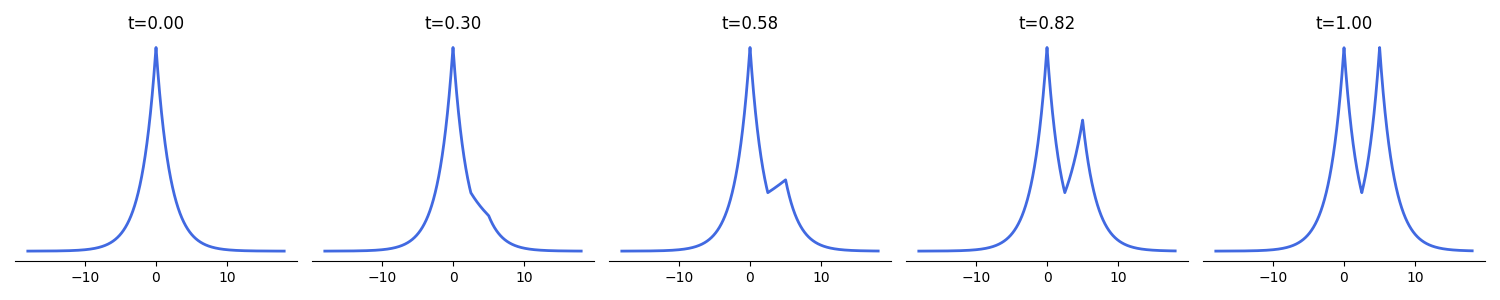}
    \end{minipage}
        \begin{minipage}{\textwidth}
        \centering
        \includegraphics[width=1\textwidth]{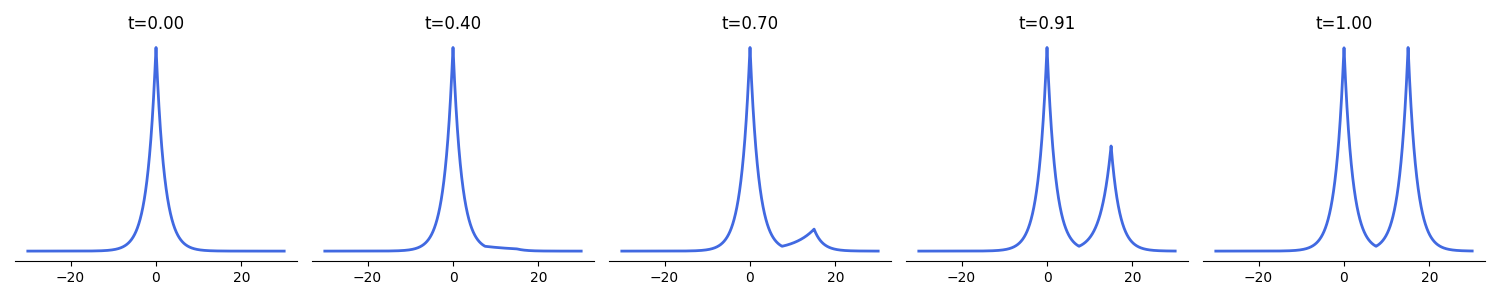}
    \end{minipage}
    \begin{minipage}{\textwidth}
        \centering
        \includegraphics[width=1\textwidth]{imgs/teleportation/combined_density_row_m_50.png}
    \end{minipage}

    \caption{Evolution of probability densities $\rho_t$ for $m \in \{1,5,15,50\}$.}
    \label{fig:density_evolution-appendix}
\end{figure}
\newpage
\subsection{Additional Details on the Experiments}

\paragraph{Implementation and evaluation details.} \label{implementation}
We implement the algorithms in Pytorch \cite{PyTorch2019}. For both experiments we use standard MLPs with "swish" activation functions. For the linear and learned interpolations we parameterize the vector fields directly. The learned interpolation also uses the same network for $\psi$. The linear interpolation uses the same network for the time dependent function $C_t$. For the learned and gradient flow interpolations we implement $C_t$ using the same network with less parameters. The gradient flow interpolation also uses a smaller network for the time scheduling function $g$ as defined in Section \ref{sec:exp}.  In case of normalized energy functions we can omit the parametrization of $C_t$ for the learned and gradient flow interpolations. We simulate the corresponding ODEs using the torchdiffeq \cite{torchdiffeq} package with the Runge-Kutta adaptive step size solver ("dopri5"). For the linear and learned interpolation we use $50$ time steps along which the loss is computed and the gradients are accumulated with a batch size of $256$. For the gradient flow interpolation we sample $4096$ particles at random uniform time points and therefore do not accumulate gradients. For the GMM example we train the linear and learned interpolations for $5*10^4$ iterations. For the many well example we train the linear interpolation for $5*10^4$ and the learned for $4*10^4$ iterations. The gradient flow interpolation is trained for $4*10^5$ iterations in both experiments. \\
We compute log weights of the generated samples by using the continuous change of variables formula and the corresponding implementation from \cite{torchdiffeq}.
We evaluate the methods by generating $5*10^4$ samples and log weight and computing the effective sample size, negative log likelihood and energy distance. We report mean and standard deviation over $10$ evaluation runs. The effective sample size is computed as in \cite{midgley2023flowannealedimportancesampling}. We compute the energy distance using the GeomLoss library \cite{feydy2019interpolating}. For the target energy functions and plotting utilities we use the code provided in \cite{midgley2023flowannealedimportancesampling}.

The gradient flow interpolation faced some difficulty in the many well experiment when, while sampling, particles entered regions of very low density. This could severely affect sampling times when using the adaptive step size solver. Therefore we used an Euler scheme and resampled a given batch if a NaN error is raised. This happens approximately for $0.01\%$ of the particles. 
\paragraph{Choice of latent distribution.} \label{std_dev}

The linear and learned interpolations depend significantly on the choice of initial distribution. In the following we visualize and compare numerically the results when starting in Gaussians with different standard deviations. Note that when using a small standard deviation not all modes are recovered. On the other hand choosing a very large standard deviation leads to the modes close to $0$ not receiving enough mass. 

\begin{figure}[h!]
     \centering
    \begin{minipage}{0.24\textwidth}
        \includegraphics[width= \linewidth]{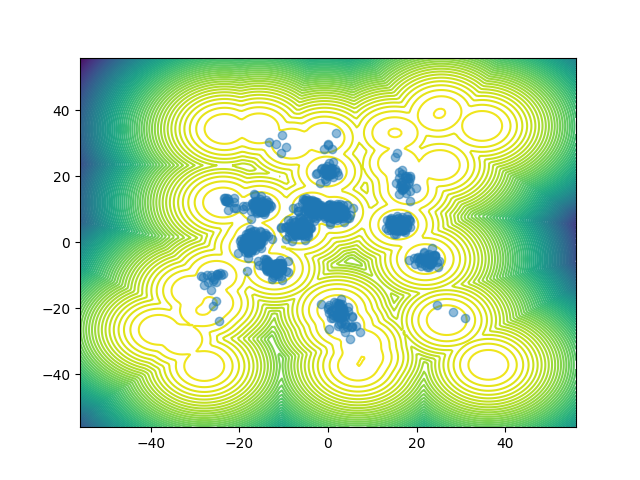} 
        \center{$\sigma =10$  }
    \end{minipage}%
    \hfill
    \begin{minipage}{0.24\textwidth}
        \includegraphics[width= \linewidth]{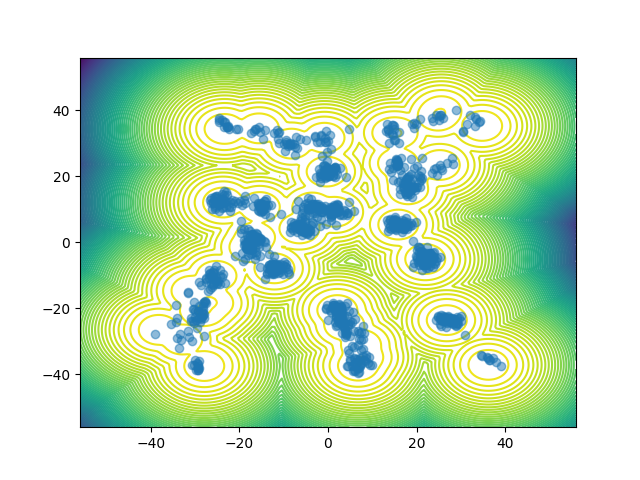} 
        \center{$\sigma =20$  }
    \end{minipage}%
    \hfill
     \begin{minipage}{0.24\textwidth}
        \includegraphics[width= \linewidth]{imgs/LINEAR_30.0.png} 
        \center{$\sigma =30$  }
    \end{minipage}%
    \hfill
    \begin{minipage}{0.24\textwidth}
        \includegraphics[width= \linewidth]{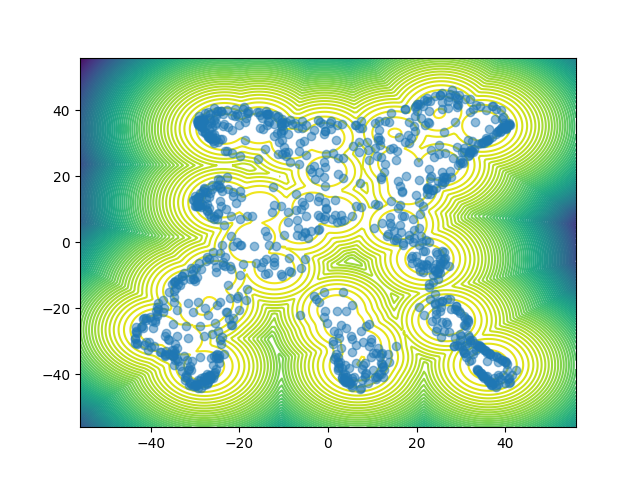} 
        \center{$\sigma =40$  }
    \end{minipage}%
    \caption{Results for the linear interpolation for different values of $\sigma$.}
\end{figure}
\begin{table}[htbp]
\centering
\begin{tabular}{cccc}
\hline
& ESS \(\uparrow\) & NLL \(\downarrow\) & Energy Distance \(\downarrow\) \\ \hline
$\sigma =10$       & 0.002 {\scriptsize $\pm$ 0.003} & 7.071 {\scriptsize $\pm$ 0.005} & 2.887 {\scriptsize $\pm$ 0.021} \\
$\sigma =20$        & 0.063 {\scriptsize $\pm$ 0.07} & 7.362 {\scriptsize $\pm$ 0.005} & 0.592 {\scriptsize $\pm$ 0.005} \\
$\sigma =30$    & 0.183{\scriptsize $\pm$ 0.05} & 8.657 {\scriptsize $\pm$ 0.012} & 0.247{\scriptsize $\pm$ 0.016}  \\
$\sigma =40$    & 0.092 {\scriptsize $\pm$ 0.001} & 12.909 {\scriptsize $\pm$ 0.023} & 0.554{\scriptsize $\pm$ 0.008} \\ \hline
\end{tabular}
\caption{Comparison of effective sample size, negative log likelihood, and the energy distance for the linear interpolation with initial distribution being a Gaussian with standard deviation $\sigma$.}
\end{table}
\begin{figure}[htbp]
     \centering
    \begin{minipage}{0.24\textwidth}
        \includegraphics[width= \linewidth]{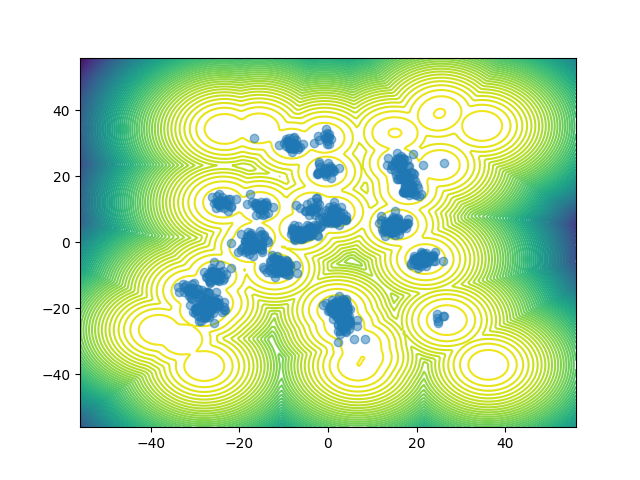} 
        \center{$\sigma =10$  }
    \end{minipage}%
    \hfill
    \begin{minipage}{0.24\textwidth}
        \includegraphics[width= \linewidth]{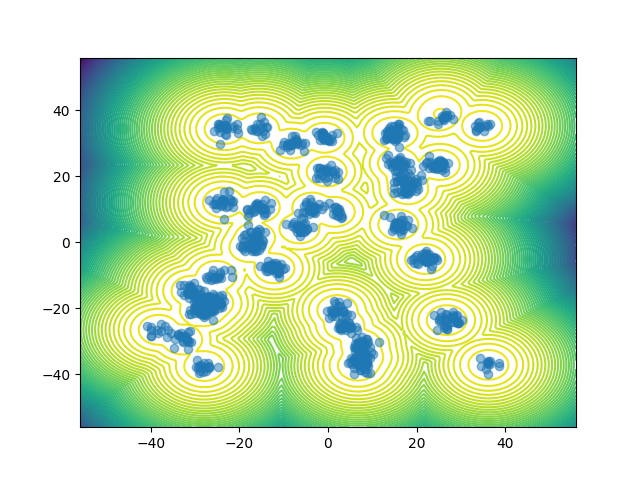} 
        \center{$\sigma =20$  }
    \end{minipage}%
    \hfill
     \begin{minipage}{0.24\textwidth}
        \includegraphics[width= \linewidth]{imgs/LEARNED_30.0.png} 
        \center{$\sigma =30$  }
    \end{minipage}%
    \hfill
    \begin{minipage}{0.24\textwidth}
        \includegraphics[width= \linewidth]{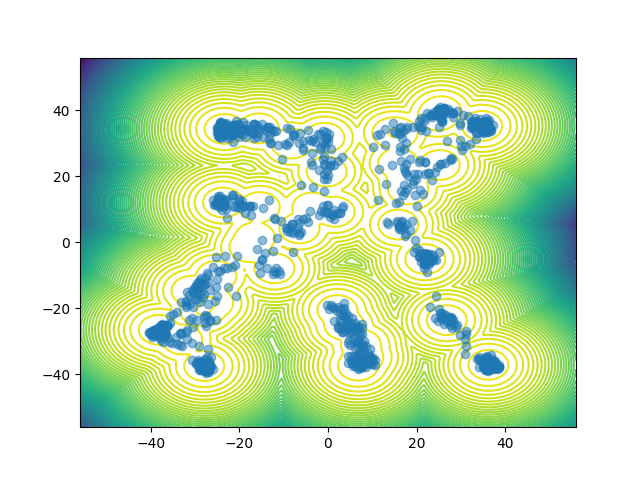} 
        \center{$\sigma =40$  }
    \end{minipage}%
    \caption{Results for the learned interpolation for different values of $\sigma$.}
\end{figure}
\begin{table}[H]
\centering
\begin{tabular}{cccc}
\hline
& ESS \(\uparrow\) & NLL \(\downarrow\) & Energy Distance \(\downarrow\) \\ \hline
$\sigma =10$       & 0.009 {\scriptsize $\pm$ 0.011} & 7.275 {\scriptsize $\pm$ 0.005} & 1.253 {\scriptsize $\pm$ 0.017} \\
$\sigma =20$        & 0.605 {\scriptsize $\pm$ 0.316} & 6.894{\scriptsize $\pm$ 0.005} & 0.058{\scriptsize $\pm$ 0.006} \\
$\sigma =30$    & 0.922{\scriptsize $\pm$ 0.002} & 6.913 {\scriptsize $\pm$ 0.003} & 0.121{\scriptsize $\pm$ 0.009} \\
$\sigma =40$    & 0.139 {\scriptsize $\pm$ 0.052} & 7.779 {\scriptsize $\pm$ 0.012} & 0.531{\scriptsize $\pm$ 0.014} \\ \hline
\end{tabular}
\caption{Comparison of effective sample size, negative log likelihood, and the energy distance for the learned interpolation with initial distribution being a Gaussian with standard deviation $\sigma$.}
\end{table}
\end{document}